\renewenvironment{proof}
{\begin{trivlist}\item\textbf{Proof.}}
{\hspace*{\fill}$\Box$\end{trivlist}}
\newenvironment{proofof}[1]
{\begin{trivlist}\item\textbf{Proof of #1.}}
{\hspace*{\fill}$\Box$\end{trivlist}}
\renewcommand{\labelenumi}{(\alph{enumi})}
\renewcommand\theenumi\labelenumi
\newcommand{\filtt}{\mathcal{F}_t}
\newtheorem{lemma}{Lemma}
\newtheorem{theorem}{Theorem}
\newtheorem{definition}{Definition}
\newcommand{\assign}{\leftarrow}
\newcommand{\R}{{\mathds{R}}}
\newcommand{\filttminone}{\mathcal{F}_{t-1}}
\newcommand{\filtzero}{\mathcal{F}_0}
\newcommand{\ignore}[1]{}
\newcommand{\E}[1]{\mathrm{E}(#1)}
\newcommand{\Ebig}[1]{\mathord{\mathrm{E}}\mathord{\left(#1\right)}}
\newcommand{\xmin}{x_{\mathrm{min}}}
\newcommand{\OneMax}{\textsc{OneMax}\xspace}
\newcommand{\LeadingOnes}{\textsc{LeadingOnes}\xspace}
\newcommand{\om}{\OneMax}\newcommand{\lo}{\textsc{LeadingOnes}\xspace}\newcommand{\wrt}{w.\,r.\,t.\xspace}
\newcommand{\ie}{i.\,e.\xspace}
\newcommand{\eg}{e.\,g.\xspace}
\newcommand{\card}[1]{|#1|}
\renewcommand{\epsilon}{\varepsilon}
\DeclareMathOperator{\Prob}{Pr}
\newcommand{\ea}{(1+1)~EA\xspace}
\newcommand{\oneoneea}{\ea}
\begin{document}

\author{Timo Kötzing\\
Hasso Plattner Institute\\
Potsdam\\
Germany
\and
Carsten Witt\\
Technical University of Denmark\\Kgs. Lyngby\\Denmark}

\title{Improved Fixed-Budget Results via Drift Analysis}

\maketitle 

\begin{abstract}
Fixed-budget theory is concerned with computing or bounding the fitness value 
achievable by randomized search heuristics within a given budget of fitness 
function evaluations. Despite recent progress in fixed-budget theory, there is a 
lack of general tools to derive such results. 
We transfer drift theory, the key tool to derive expected optimization times, to the fixed-budged perspective. 
A first and easy-to-use statement concerned with iterating drift in so-called greed-admitting scenarios 
immediately translates into bounds on 
the expected function value. 
Afterwards, we consider a more general tool based on the well-known variable drift theorem. 
Applications 
of this technique to the \textsc{LeadingOnes} benchmark function yield statements 
that are more precise than the previous state of the art. 
\end{abstract}

\section{Introduction}

Randomized search heuristics are a class of optimization algorithms which use probabilistic choices with the aim of maximizing or minimizing a given objective function. Typical examples of such algorithms use inspiration from nature in order to determine the method of search, most prominently evolutionary algorithms, which use the concepts of \emph{mutation} (slightly altering a solution) and \emph{selection} (giving preference to solutions with better objective value).

The theory of randomized search heuristics aims at understanding such heuristics by explaining their optimization behavior. Recent results are typically phrased as run time results, for example by giving upper (and lower) bounds on the expected time until a solution of a certain quality (typically the best possible quality) is found. This is called the (expected) \emph{optimization time}. A different approach, called \emph{fixed-budget analysis}, bounds the quality of the current solution of the heuristic after a given amount of time. In order to ease the analysis and by convention, in this theoretical framework \emph{time} is approximated as the number of evaluations of the objective function (called fitness evaluations).

In this paper we are concerned with the approach of giving a fixed-budget analysis. This approach was introduced to 
the analysis of randomized search heuristics by Jansen and Zarges~\cite{JansenZargesGECCO12}, who derived fixed-budget results 
for the classical example functions \om and \lo by bounding the expected progress in each iteration. A different perspective 
was proposed by Doerr, Jansen, Witt and Zarges \cite{DJWZGECCO13}, who showed that fixed-budget statements can be derived from bounds on optimization times 
if these exhibit strong concentration. Lengler and Spooner~\cite{LenglerSpoonerFOGA15} proposed a variant of multiplicative drift for 
fixed-budget results and the use of differential equations in the context of \om and general linear functions. 
Nallaperuma, Neumann and Sudholt \cite{NallaperumaNSECJ17} applied fixed-budget theory to the analysis of evolutionary algorithms on the 
traveling salesman problem 
and Jansen and Zarges~\cite{JansenZargesTEC2014} to artificial immune systems. The quality gains of optimal black-box algorithms 
on \om in a fixed-budget perspective were analyzed by Doerr, Doerr and Yang \cite{DoerrDYTCS20}. In a recent technical report,
He, Jansen and Zarges \cite{HeJansenZargesArxiv20} consider so-called unlimited budgets to estimate fitness values in particular for points of time 
larger than the expected optimization time. A recent survey by Jansen~\cite{JansenFixedBudgetSurvery} 
summarizes the state of the art in the area of fixed-budget analysis.

There are general methods easing the analysis of randomized search heuristics. Most importantly, in order to derive bounds on the optimization time, we can make use of drift theory.  Drift theory is a general term for a collection of theorems that consider random processes and bound the expected time it takes the process to reach a certain value –- the first-hitting time. The beauty and appeal of these theorems lie in them usually having few restrictions but yielding strong results. Intuitively speaking, in order to use a drift theorem, one only needs to estimate the expected change of a random process –- the drift –- at any given point in time. Hence, a drift theorem turns expected local changes of a process into expected first-hitting times. In other words, local information of the process is transformed into global information. See~\cite{Lengler2020} for an extensive discussion of drift theory.

In contrast to the numerous drift theorems available for bounding the optimization time, there is no corresponding theorem for making a fixed-budget analysis apart from one for the multiplicative case given in~\cite{LenglerSpoonerFOGA15}. With this paper we aim to provide several such drift theorems, applicable in different settings and with a different angle of conclusions. In each our main goal is to provide an \emph{upper bound} on the distance to the optimum after $t$ iterations, for $t$ less than the expected optimization time. Upper bounds alone do not allow for a fair comparison of algorithms, since a bad upper bound does not exclude the possibility of a good performance of an algorithm; for this, we require lower bounds. However, one of our techniques also allows us to derive lower bounds. Furthermore, when upper and lower bounds are close together we can conclude that the derived bounds are correspondingly tight, highlighting the quality of our methods.

We start, in Section~\ref{sec:directDrift}, by giving a theorem which iteratively applies local drift estimates to derive a global drift estimate after $t$ iterations. Crucial for this theorem is that the drift condition is \emph{unlimited time}, by which we mean that the drift condition has to hold for all times $t$, not just (which is the typical case in the literature for drift theorems) those before the optimum is hit. This theorem is applicable in the case where there is no optimum (and optimization progresses indefinitely) and in the case that, in the optimum, the drift is $0$. In order to bypass these limitations we also give a variant in Section~\ref{sec:directDrift} which allows for \emph{limited time} drift, where the drift condition only needs to hold before the optimum is hit; however, in this case we pick up an additional error term in the result, derived from the possibility of hitting the optimum within the allowed time budget of $t$. Thus, in order to apply this theorem, one will typically need concentrations bounds for the time to hit the optimum.

For both these theorems, the drift function (bounding the drift) has to be convex and \emph{greed-admitting}, which intuitively says that being closer to the goal is always better in terms of the expected state after an additional iteration, while search points closer to the goal are required to have weaker drift. These conditions are fulfilled in many sample applications; as examples we give analyses of the \ea on \lo and \om. Note that these analyses seem to be rather tight, but we do not offer any lower bounds, since our techniques crucially only apply in one direction (owing to an application of Jensen's Inequality to convex drift functions).

In Section~\ref{sec:variableFixedBudget} we use a potential-based approach and give a variable drift theorem for fixed-budget analysis. As a special case, where
the drift function is constant, we give an additive drift theorem for fixed-budget analysis and derive a result for \ea on \lo. In general, the approach bounds 
the expected value of the potential but not of the fitness. Therefore, we also study how to derive a bound on the fitness itself, both from above and from below, by 
inverting the potential function and using tail bounds on its value. The approach uses a generalized theorem showing tail bounds for martingale 
differences, which overcomes a weakness of existing martingale difference theorems in our specific application. This generalization 
may be of independent interest.

Our results allow for giving strong fixed-budget results which were not obtainable before. For the \ea on \lo with a budget of $t = o(n^2)$ iterations, the original paper \cite{JansenZargesGECCO12} gives a lower bound of $2t/n - o(t/n)$ for the expected fitness after $t$ iterations, which we recover with a simple proof in Theorem~\ref{thm:leadingOnesWithDirectDrift}. Our theorem also allows budgets closer to the expected optimization time, where we get a lower bound of $n\ln(1+2t/n^2) - O(1)$.

For the \ea on \om, no concrete formula for a bound on the fitness value after $t$ iterations was known: The original work \cite{JansenZargesGECCO12} could only handle RLS on \om, not the \ea. The multiplicative drift theorem of \cite{LenglerSpoonerFOGA15} allows for deriving a lower bound of $n/2 + t/(2e)$ for $t = o(n)$ using a multiplicative drift constant of $(1-1/n)^n/n$. Since our drift theorem allows for variable drift, we can give a bound of $n/2 + t/(2\sqrt{e}) - o(t)$ for the \ea on \om with $t = o(n)$ (see Theorem~\ref{thm:oneMaxWithDirectDrift}). Note that \cite{LenglerSpoonerFOGA15} also gives bounds for values of $t$ closer to the expected optimization time.

Furthermore, we are not only concerned with expected values but also give strong concentration bounds. We consider the \ea on \lo and show that the fitness after $t$ steps is strongly concentrated around its expectation (see Theorem~\ref{theo:bounds-fixed-budget-final}). The  error term obtained is asymptotically smaller than in the previous work \cite{DJWZGECCO13} and the statement is also less complex.

Fixed-budget results that hold with high probability are crucial for the analysis of algorithm configurators~\cite{HallOSGECCO19}. These configurators test different algorithms for fixed budgets in order to make statements about their appropriateness in a given setting. Thus, we believe that this work also contributes to the better understanding of the strengths and weaknesses of algorithm configurators.

The remainder of the paper is structured as follows. Next we give mathematical preliminaries, covering problem and algorithm definitions as well as some well-known results from the literature which we require later. In Section~\ref{sec:directDrift} we give our direct fixed-budget drift theorems, as well as its applications to the \ea on \om and \lo. In Section~\ref{sec:variableFixedBudget} we give a variable fixed-budget drift theorem and its corollary for additive drift. We show how to apply this variable fixed-budget drift theorem to obtain very strong bounds in Section~\ref{sec:upper-tail-g}. We conclude in Section~\ref{sec:conclusions}. 

\section{Preliminaries}

The concrete objective functions we are concerned with in this paper are \om and \lo, studied in a large number of papers. These two functions are defined as follows. For a fixed natural number $n$, the functions map bit strings $x \in \{0,1\}^n$ of length $n$ to natural numbers such that
$$
\om(x) = \sum_{i=1}^n x_i
$$
is the number of $1$s in the bit string $x$ and
$$
\lo(x) = \sum_{i=1}^n \prod_{j=1}^i x_j
$$
is the number of leading $1$s in $x$ before the first $0$ (if any, $n$ otherwise).

We consider for application only one algorithm, the well-known \ea given in Algorithm~\ref{alg:ea} below.

\begin{algorithm2e}
	choose $x$ from $\{0,1\}^n$ uniformly at random\;
	\While{optimum not reached}{		$y \assign x$\;
		\For{$i = 1$ \KwTo $n$}{
			with probability $1/n$: $y_i \assign 1 - y_i$\;
		}
		\lIf{$f(y) \geq f(x)$}{$x \assign y$}
	}
\caption{The \ea for maximizing function $f$}
\label{alg:ea}
\end{algorithm2e}

For any function $f$ and $i \geq 0$, we let $f^i$ denote the $i$-times self-composition of $f$ (with $f^0$ being the identity).

\subsection{Known Results for the (1+1) EA on LeadingOnes}

We will use the following concentration result from \cite{DJWZGECCO13}, bounding the optimization time of the \ea on \lo.

\begin{theorem}[{\cite[Theorem~7]{DJWZGECCO13}}]
\label{theo:djwz}
For all $d \leq 2n^2$, the probability that the optimization time of the \ea on \lo deviates from its expectation of $(1/2)(n^2 - n)((1 + 1/(n - 1))^n - 1)$ by at least $d$, is at most $4 \exp(-d^2/(20e^2n^3))$.
\end{theorem}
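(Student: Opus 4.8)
The plan is to reduce the claim to a concentration inequality for a sum of independent random variables. The starting point is the classical structural analysis of the \ea on \lo (going back to Droste, Jansen and Wegener): one shows, by induction on the number of accepted steps, that whenever the current search point attains \lo-value exactly $i$ for the first time, the bits at positions $i+2,\dots,n$ are independent and uniformly distributed. Indeed, the only steps that change the current \lo-value flip the leftmost zero (at position $i+1$) while leaving the first $i$ ones untouched, and such a step acts on positions $i+2,\dots,n$ through independent bit flips, hence preserves uniformity there. Given this invariant, the number of steps spent with current value exactly $i$ is geometrically distributed with parameter $p_i:=(1-1/n)^i/n$ (flip bit $i+1$, keep the first $i$ bits), the value $i$ is visited at all with probability exactly $1/2$ (by uniformity of bit $i+1$ at the moment the previous level is left, and of the first bit at initialization when $i=0$), and all of these waiting times and indicator events are mutually independent. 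Consequently $T$ is distributed exactly as $\sum_{i=0}^{n-1}B_iG_i$ with independent $B_i\sim\mathrm{Bernoulli}(1/2)$ and $G_i\sim\mathrm{Geom}(p_i)$; summing $\E{B_iG_i}=\frac{1}{2p_i}=\frac{n}{2}(1+\tfrac1{n-1})^i$ over $i$ reproduces the claimed expectation $\tfrac12(n^2-n)((1+\tfrac1{n-1})^n-1)$, a useful consistency check.

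Second, I would prove a two-sided Chernoff-type bound for $T-\E{T}=\sum_i(B_iG_i-\E{B_iG_i})$. Each summand has an exponential tail, so its moment generating function is finite for $|\theta|$ below a constant multiple of $p_i$, and a routine estimate gives $\E{e^{\theta(B_iG_i-\E{B_iG_i})}}\le\exp(c\theta^2/p_i^2)$ whenever $|\theta|\le\kappa p_i$, for absolute constants $c,\kappa$. Multiplying over $i$ and applying Markov's inequality to $e^{\pm\theta(T-\E{T})}$ gives, for $|\theta|\le\kappa\min_i p_i=\kappa p_{n-1}$,
\[
\Prob(T-\E{T}\ge d)\ \le\ \exp\!\Bigl(c\theta^2\sum_{i=0}^{n-1}\frac1{p_i^2}-\theta d\Bigr),
\]
and analogously for the lower tail. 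I would then optimize over $\theta$; the unconstrained optimum $\theta^\ast=d/\bigl(2c\sum_i 1/p_i^2\bigr)$ stays within the admissible range $|\theta|\le\kappa p_{n-1}=\Theta(1/n)$ precisely because $d\le 2n^2$ --- this is where the hypothesis enters --- so the optimum is attained in the interior and each tail is bounded by $\exp\bigl(-d^2/(4c\sum_i 1/p_i^2)\bigr)$, a union over the two tails (each possibly split into an interior and a boundary regime) absorbing into the prefactor $4$. It then remains to evaluate the geometric series $\sum_{i=0}^{n-1}1/p_i^2=n^2\bigl((1+\tfrac1{n-1})^{2n}-1\bigr)\big/\bigl((1+\tfrac1{n-1})^2-1\bigr)$ and to bound it by $\tfrac12 e^2 n^3(1+o(1))$, after which a deliberately lossy choice of the constant $c$ turns the exponent into $-d^2/(20e^2n^3)$.

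The step I expect to be the real obstacle is the concentration argument, not the decomposition. The summands $B_iG_i$ are unbounded, so Hoeffding's inequality does not apply, and they are far from identically distributed since $p_i$ ranges over roughly $[1/(en),1/n]$, so a Bernstein-style bound needs exactly the per-variable MGF estimate above together with a careful check that the cutoff $|\theta|\le\kappa p_{n-1}$ does not interfere in the regime $d\le 2n^2$. (An alternative is to invoke an off-the-shelf tail inequality for sums of independent geometric random variables of the kind developed for fitness-level arguments, which packages this MGF computation; the remaining work is then only the bookkeeping of constants and the evaluation of $\sum_i 1/p_i^2$.) The decomposition step is essentially classical and needs care only in formalizing the ``remaining bits stay uniform'' invariant and in treating $i=0$, where the relevant fair coin comes from the random initialization rather than from an accepted mutation.
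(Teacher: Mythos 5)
This statement is not proved in the paper at all; it is imported verbatim from \cite[Theorem~7]{DJWZGECCO13}, and your reconstruction follows essentially the same route as that original source: the exact decomposition of the optimization time into independent Bernoulli$(1/2)$-thinned geometric waiting times per fitness level (justified by the uniformity of the bits behind the leftmost zero), followed by a two-sided MGF/Bernstein-type bound for sums of sub-exponential variables, where the admissible range of the Chernoff parameter is exactly what the hypothesis $d\le 2n^2$ guarantees. Your argument is sound as an outline; what remains is only the flagged bookkeeping, namely verifying that constants $c,\kappa$ compatible with both the per-variable MGF estimate and the interior-optimum condition exist, and evaluating $\sum_{i}p_i^{-2}\approx\tfrac{e^2-1}{2}n^3$, so as to land on the stated constant $20e^2$.
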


 The following lemma collects some important and well-known results for the optimization process of the \ea on \lo.

\begin{lemma}
\label{lem:summary-lo}
Consider the \oneoneea on \LeadingOnes, let $x_t$ denote its search point at time~$t$ and 
$X_t=n-\LeadingOnes(x_t)$ the fitness distance. Then
\begin{enumerate}
\item 
$\E{X_{t}-X_{t+1}\mid X_{t}} = (2-2^{1-X_{t}})(1-1/n)^{n-X_{t}}/n$
 \item 
$\Prob(X_{t+1}\neq X_t\mid X_t; T>t) = (1-1/n)^{n-X_t} \frac{1}{n}$
\item 
For $j\ge 1$, 
$\Prob(X_{t+1}=X_t-j)\le \frac{1}{n}\left(\frac{1}{2}\right)^{j-1}$
\item $G_t\coloneqq X_{t}-X_{t+1}$ is a random variable with support $0,\dots,X_t$ and the following conditional distribution
on $G_t\ge 1$:
\begin{itemize}
\item $\Prob(G_t=i)=(1/2)^{i}$ for $i<X_t$
\item $\Prob(G_t=X_t)=(1/2)^{X_t-1}$
\end{itemize}
For the  moment-generating function of this $G_t$ (conditional on $G_t\ge 1$) it holds that 
\[
\E{e^{\eta G_t}\mid X_t} = \frac{(e^\eta/2)^{X_t} (1-e^\eta) + (e^\eta/2)}{1-e^\eta / 2}.
\]
\item
The expected optimization time equals $\frac{n^2-n}{2}\left(\left(1+\frac{1}{n-1}\right)^n-1\right)$, 
which is $\frac{e-1}{2}n^2 \pm O(n)$. 
\end{enumerate}
\end{lemma}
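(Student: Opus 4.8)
The plan is to derive all five items from one structural fact about the \ea on \lo, the \emph{free-bits property}: for every time~$t$ and every $k<n$, conditioned on $\lo(x_t)=k$ (equivalently $X_t=n-k\ge1$) the bits in positions $1,\dots,k$ are~$1$, the bit in position $k+1$ is~$0$, and the remaining bits $x_t[k+2],\dots,x_t[n]$ are mutually independent and uniform in $\{0,1\}$. I would prove this by induction on~$t$: the base case is immediate since $x_0$ is uniform, and for the step, fixing $\lo(x_t)=k$, I would analyse one mutation--selection step. The $k$ leading ones survive iff none of positions $1,\dots,k$ flips; the fitness strictly increases iff, in addition, position $k+1$ flips; and in every accepted offspring each bit in positions $k+2,\dots,n$ is an independent copy of the corresponding parent bit flipped with probability~$1/n$ -- hence again uniform -- while conditioning on the new fitness value $k'$ only pins down an initial segment of those offspring bits and leaves positions $k'+2,\dots,n$ uniform and independent. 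The one place needing care is checking that rejected and fitness-neutral steps also preserve the property; everything can be phrased as a clean statement about the marginal distribution of~$x_t$. This property (and the corollary used for~(e) below) is really the only content of the lemma.

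Granting the free-bits property, items~(a)--(d) are short. For~(b): given $X_t\ge1$ (i.e.\ $T>t$), the fitness changes -- and then strictly increases -- iff positions $1,\dots,n-X_t$ all survive and position $n-X_t+1$ flips, which has probability $(1-1/n)^{n-X_t}\cdot\frac1n$. For~(d): conditioned on an improving step, write $G_t=1+Z$ where $Z$ is the length of the run of ones starting at the first free position of the offspring; since those bits are independent fair coins, $\Prob(Z=z)=2^{-(z+1)}$ for $z<X_t-1$ and $\Prob(Z=X_t-1)=2^{-(X_t-1)}$, which is exactly the stated conditional law of~$G_t$, and the moment-generating function is the finite sum $\sum_{i=1}^{X_t-1}(e^\eta/2)^i+2(e^\eta/2)^{X_t}$, which evaluates by the geometric-series formula to the claimed closed form. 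Item~(a) is then $\E{X_t-X_{t+1}\mid X_t}=\Prob(G_t\ge1\mid X_t)\cdot\E{G_t\mid G_t\ge1,X_t}$; the first factor is~(b), and the second equals $\sum_{i=1}^{X_t-1}i\,2^{-i}+X_t\,2^{-(X_t-1)}=2-2^{1-X_t}$ (again a geometric-series computation, or a one-line induction on~$X_t$), giving the product in~(a), and the case $X_t=0$ is consistent since both sides vanish. Item~(c) follows from~(d): $\Prob(X_{t+1}=X_t-j)=\Prob(G_t\ge1)\,\Prob(G_t=j\mid G_t\ge1)\le\frac1n\cdot2^{-(j-1)}$, using $\Prob(G_t\ge1)\le\frac1n$ and, for every $j\ge1$, $\Prob(G_t=j\mid G_t\ge1)\le2^{-(j-1)}$.

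For~(e) I would use the classical fitness-level decomposition. Since the fitness is non-decreasing and, while it equals~$i$, leaves that level in one step with probability $p_i=(1-1/n)^i/n$, the expected number of steps spent at level~$i$ equals $\frac1{p_i}$ times the probability that level~$i$ is ever attained. The free-bits property makes this attainment probability exactly~$\frac12$ for each $i\in\{0,\dots,n-1\}$: when the fitness first reaches a value $\ge i$ -- at initialisation or via an improving step -- whether it lands exactly on~$i$ is decided by a single fair coin, independent of the past. Hence $\E{T}=\sum_{i=0}^{n-1}\frac12\cdot\frac{n}{(1-1/n)^i}=\frac n2\cdot\frac{(1-1/n)^{-n}-1}{(1-1/n)^{-1}-1}=\frac{n(n-1)}{2}\bigl((1+\tfrac1{n-1})^n-1\bigr)$, and $(1+\tfrac1{n-1})^n=e+O(1/n)$ yields $\frac{e-1}{2}n^2\pm O(n)$ (alternatively, one simply cites the original analysis of the \ea on \lo for~(e)). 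The main obstacle throughout is establishing the free-bits property and its ``probability-$\frac12$'' corollary; the remaining manipulations are elementary bookkeeping with finite geometric series.
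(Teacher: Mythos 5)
Your proposal is correct, but it is considerably more self-contained than the paper's own proof, which is largely by citation: the paper refers to \cite{DJWZGECCO13} and \cite{LehreWittISAAC2014} for items (a)--(c) and for the fact that the $X_t-1$ bits behind the first $0$ are uniform and independent, cites \cite{BDNLeadingOnes} for item (e), and only works out item (d) itself -- namely the conditional distribution of $G_t$ (via the free-rider counting you also use) and its moment-generating function via the geometric series, a computation identical to yours. What you add is a proof of the ingredients the paper imports: the inductive ``free-bits'' argument (where you correctly flag the one delicate point, namely that acceptance/rejection and fitness-neutral steps depend only on the mutation mask of the first $k+1$ positions and that conditioning on the new fitness pins down only an initial segment of the offspring's free bits), and the exact expected-time formula via the fitness-level decomposition with attainment probability exactly $1/2$ per level, $\E{T}=\tfrac12\sum_{i=0}^{n-1}n(1-1/n)^{-i}=\tfrac{n(n-1)}{2}\bigl((1+\tfrac{1}{n-1})^n-1\bigr)$ -- which is essentially the argument of B\"ottcher, Doerr and Neumann that the paper cites. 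Your derivations of (a) from (b) and the conditional mean $2-2^{1-X_t}$, and of (c) from (d) together with $\Prob(G_t\ge 1)\le 1/n$, are sound; so the two proofs agree on the computational core, and the difference is that you re-derive the known structural facts while the paper economizes by citing them.
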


\begin{proof} The proofs of the first three statements can be found in 
 in \cite{DJWZGECCO13} and \cite[Lemma 12 of technical report]{LehreWittISAAC2014}. For the first part of the fourth statement, 
we recall from these papers that the $X_t-1$ bits after the first~$0$ are uniform and independent. Hence, the probability of observing 
$i-1<X_t$ of these so-called free-riders is $(1/2)^i$ since $i-1$ bits have to be set to~$1$ and the $i$-th bit to~$0$. 
If $i=X_t-1$ then all $i$ bits have to be set to~$1$, which has probability $(1/2)^{i}$. 

For the moment-generating function, we write (using the first part of the fourth statement) 
\[
\E{e^{\eta G_t}\mid X_t} = \sum_{j=1}^{X_t-1} \left(\frac{1}{2}\right)^j e^{\eta j} + \left(\frac{1}{2}\right)^{X_t-1} e^{\eta X_t}
\]
Since, by the geometric series,  
\[
\sum_{j=1}^{X_t-1} \left(\frac{1}{2}\right)^j e^{\eta j} = \frac{e^{\eta}/2-(e^{\eta}/2)^{X_t} }{1-e^{\eta}/2},
\]
we have 
\begin{align*}
\E{e^{\eta G_t}\mid X_t} & =  \frac{  e^{\eta}/2 -(e^{\eta}/2)^{X_t} }{1-e^{\eta}/2} + 2 \left(\frac{e^\eta}{2}\right)^{X_t} \\ 
& =  \frac{  e^{\eta}/2 -(e^{\eta}/2)^{X_t} + 2 (e^{\eta}/2)^{X_t} (1-e^{\eta}/2) }{1-e^{\eta}/2} \\ 
& = \frac{(e^\eta/2)^{X_t} (1-e^\eta) + (e^\eta/2)}{1-e^\eta / 2}.
\end{align*}
The fifth statement is due to \cite{BDNLeadingOnes}.
\end{proof}

\section{Direct Fixed-Budged Drift Theorems}\label{sec:directDrift}

In this section we give a drift theorem which gives a fixed-budget result without the detour via first hitting times. The idea is to focus on drift which gets monotonically weaker as we approach the optimum, but where being closer to the optimum is still better in terms of drift. To this end, we make the following definition.
\begin{definition}
We say that a drift function $h\colon S\to \R^{> 0}$ is \emph{greed-admitting} if $\mathrm{id} - h$ (the function $x \mapsto x - h(x)$) is monotone non-decreasing.
\end{definition}
Intuitively, this formalizes the idea that being closer to the goal is always better (\ie\ greed is good). Greed could be bad, if from one part of the search space, the drift is much higher than when being a bit closer, so that being a bit closer does not balance out the loss in drift. Note that any given differentiable $h$ is greed-admitting if and only if $h' \leq 1$. 

Typical drift functions are greed-admitting. For example, if we drift on integers, in many situations drift is less than $1$, while being closer means being at least one step closer, so being closer is always better in this sense. An example monotone process on $\{0,1,2\}$ which has a drift which is not greed-admitting is the following: $X_0$ is $2$ and the process moves to any of the states $0,1,2$ uniformly. State $0$ is the target state, from state $1$ there is only a very small probability to progress to $0$ (say $0.1$). Then it is better to stay in state $2$ than be trapped in state $1$, if the goal is to progress to state~$0$. 

We now give two different versions of the direct fixed-budget drift theorem. The first considers \emph{unlimited time}, that is, the situation where drift carries on for an arbitrary time (and does not stop once a certain threshold value is reached). This is applicable in situations where there is no end to the process (for example for random walks on the line) or when the drift eventually goes all the way down to $0$ so that the drift condition holds vacuously even when no progress is possibly any more (this is for example the case for multiplicative drift, where the drift is $\delta$ times the current value, which is naturally $0$ once $0$ has been reached). Note that this is a very strong requirement of the theorem, leading to a strong conclusion.

A special case of the following theorem is given in~\cite{LenglerSpoonerFOGA15}, where drift is necessarily multiplicative.

\begin{theorem}[Direct Fixed-Budget Drift, unlimited time]\label{thm:directDrift}
Let $X_t$, $t\ge 0$, be a stochastic process on $S \subseteq \R$, adapted to a filtration $\filtt$. Let $h\colon S \to \R^{\geq 0}$ be a \textbf{convex and greed-admitting} function such that we have the drift condition
\begin{description}
	\item[(D-ut)] $\E{X_t - X_{t+1} \mid \filtt} \geq h(X_t)$.
\end{description}	
Define $\tilde{h}(x) = x - h(x)$. Thus, the drift condition is equivalent to
\begin{description}
	\item[(D-ut')] $\E{X_{t+1} \mid \filtt} \leq \tilde{h}(X_t)$.
\end{description}	
We have that, for all $t \geq 0$,\footnote{Recall from the preliminaries that $f^i$ is the $i$-times self-composition of a function $f$.}
\[
\E{X_t \mid \filtzero} \leq \tilde{h}^t(X_0)
\]
and, in particular,
\[
\E{X_t} \leq \tilde{h}^t(\E{X_0}).
\]
\end{theorem}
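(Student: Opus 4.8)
The plan is to prove $\E{X_t \mid \filtzero} \le \tilde h^t(X_0)$ by induction on $t$, and then obtain the statement about $\E{X_t}$ by one final application of Jensen's inequality. The base case $t=0$ is immediate since $\tilde h^0$ is the identity. For the inductive step, suppose $\E{X_t \mid \filtzero} \le \tilde h^t(X_0)$. Using the tower property and the drift condition (D-ut') I would first write
\[
\E{X_{t+1} \mid \filtzero} = \E{\E{X_{t+1} \mid \filtt} \mid \filtzero} \le \E{\tilde h(X_t) \mid \filtzero}.
\]
Now I need to push the conditional expectation inside $\tilde h$. Here convexity and greed-admittingness of $h$ come into play: $\tilde h = \mathrm{id} - h$ is \emph{concave} (since $h$ is convex) and \emph{monotone non-decreasing} (this is exactly greed-admittingness). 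Concavity gives, by Jensen's inequality, $\E{\tilde h(X_t)\mid\filtzero} \le \tilde h(\E{X_t\mid\filtzero})$; monotonicity then gives $\tilde h(\E{X_t\mid\filtzero}) \le \tilde h(\tilde h^t(X_0)) = \tilde h^{t+1}(X_0)$, using the induction hypothesis. Chaining these inequalities closes the induction.

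For the "in particular" statement, I would apply the bound $\E{X_t\mid\filtzero}\le \tilde h^t(X_0)$, take expectations over $X_0$, and use that $\tilde h^t$ is itself concave and non-decreasing: compositions of concave non-decreasing functions are concave and non-decreasing, so a final application of Jensen yields $\E{X_t} = \E{\E{X_t\mid\filtzero}} \le \E{\tilde h^t(X_0)} \le \tilde h^t(\E{X_0})$.

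The main obstacle is the step that moves the expectation inside $\tilde h$: it is crucial that $\tilde h$ is \emph{both} concave and monotone non-decreasing, and that these properties are inherited under self-composition so that the final Jensen step is valid. One must also be slightly careful about the direction of the inequalities — everything works because $\tilde h$ is non-decreasing, so the induction hypothesis (an upper bound on $\E{X_t\mid\filtzero}$) propagates through $\tilde h$ without flipping. A minor technical point is ensuring the relevant expectations exist and are finite so that Jensen's inequality applies; this is typically guaranteed by the setting (e.g.\ $S$ bounded, or integrability assumed implicitly), and I would note it rather than belabor it.
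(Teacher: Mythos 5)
Your proposal is correct and follows essentially the same route as the paper's proof: tower property plus the drift condition, Jensen's inequality via concavity of $\tilde{h}$, induction using that $\tilde{h}$ is non-decreasing (greed-admitting), and a final Jensen step for the unconditional bound. Your explicit remark that $\tilde{h}^t$ inherits concavity and monotonicity under composition is a slightly more careful justification of that last step than the paper gives, but it is the same argument.
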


\begin{proof} Note that $\tilde{h}$ is concave, since the second derivative of $\mathrm{id} - h$ is $-h''$.
We have, using this concavity of for Jensen's Inequality, for all $t$,
\begin{align*}
\E{X_{t+1} \mid \filtzero}
 & = \E{\E{X_{t+1} \mid \filtt} \mid \filtzero}\\
 & \leq \E{\E{\tilde{h}(X_{t}) \mid \filtt} \mid \filtzero}\\
 & = \E{\tilde{h}(X_{t}) \mid \filtzero}\\
 & \leq \tilde{h}(\E{X_{t} \mid \filtzero}).
 \end{align*}
Thus, the claim follows by induction with $\tilde{h}$ being non-decreasing (since $h$ is greed-admitting).
The second statement of the theorem follows with Jensen's Inequality.
\end{proof}

Now we get to the second version of the theorem, considering the more frequent case where no guarantee on the drift can be given once the optimum has been found. This weaker requirement leads to a weaker conclusion.

\begin{theorem}[Direct Fixed-Budget Drift, limited time]\label{thm:directDriftLimited}
Let $X_t$, $t\ge 0$, be a stochastic process on $S \subseteq \R$, $0 = \min S$, adapted to a filtration $\filtt$. Let $T\coloneqq \min\{t\ge 0\mid X_t=0\}$ and $h\colon S \to \R^{\geq 0}$ be a \textbf{differentiable, convex and greed-admitting} function such that $\tilde{h}'(0) \in \; ]0,1]$ and we have the drift condition
\begin{description}
	\item[(D-lt)] $\E{X_t - X_{t+1} \mid \filtt; t<T} \geq h(X_t)$.
\end{description}	
Define $\tilde{h}(x) = x - h(x)$. Thus, the drift condition is equivalent to
\begin{description}
	\item[(D-lt')] $\E{X_{t+1} \mid \filtt; t<T} \leq \tilde{h}(X_t)$.
\end{description}	
We have that, for all $t \geq 0$,
\[
\E{X_t \mid \filtzero} \leq \tilde{h}^t(X_0) + \frac{\tilde{h}(0)}{\tilde{h}'(0)}
\]
and, in particular,
\[
\E{X_t} \leq \tilde{h}^t(\E{X_0}) - \frac{\tilde{h}(0)}{\tilde{h}'(0)} \cdot \Prob(t \geq T \mid \filtzero).
\]
\end{theorem}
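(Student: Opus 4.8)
The plan is to reduce the limited-time statement to the unlimited-time Theorem~\ref{thm:directDrift} by letting the drift run deterministically past the target value~$0$ and then paying for the resulting discrepancy. First I would record the structural facts already used in the proof of Theorem~\ref{thm:directDrift}: $\tilde h=\mathrm{id}-h$ is concave (its second derivative is $-h''\le 0$) and non-decreasing (greed-admitting), and here also $\tilde h(0)=-h(0)\le 0$. Differentiability together with $\tilde h'(0)\in\,]0,1]$ lets me extend $\tilde h$ to arguments below~$0$ by the affine map through $(0,\tilde h(0))$ of slope $\tilde h'(0)$; this keeps $\tilde h$ concave with derivative $\le 1$, hence the extended $h=\mathrm{id}-\tilde h$ stays convex, greed-admitting, and non-negative on the range that will matter, so that Theorem~\ref{thm:directDrift} will be applicable to the auxiliary process below.

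Next I would introduce $\hat X_t\coloneqq X_t$ for $t\le T$ and $\hat X_{t+1}\coloneqq\tilde h(\hat X_t)$ for $t\ge T$. Since $0=\min S$ is the (absorbing) target, on $\{T\le t\}$ the original process sits at~$0$ while $\hat X_t=\tilde h^{t-T}(0)$, which is $\le 0$ because $\tilde h$ is non-decreasing with $\tilde h(0)\le 0$. I would then check that $\hat X_t$ satisfies the unlimited-time inequality (D-ut') for \emph{every}~$t$: on $\{t<T\}$ it is exactly (D-lt'), since there $\hat X$ coincides with $X$; on $\{t\ge T\}$ it holds with equality by construction. Theorem~\ref{thm:directDrift} applied to $\hat X_t$ then gives $\E{\hat X_t\mid\filtzero}\le\tilde h^t(X_0)$, and, by Jensen's inequality for the concave $\tilde h^t$, $\E{\hat X_t}\le\tilde h^t(\E{X_0})$.

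It remains to control the gap $X_t-\hat X_t$, which is~$0$ for $t<T$ and equals $-\tilde h^{t-T}(0)\ge 0$ on $\{T\le t\}$. Here I would use concavity at~$0$: since $\tilde h$ lies below its tangent line at~$0$ and is non-decreasing, the iterates $\tilde h^k(0)$ are dominated by the iterates of that tangent line, which form a geometric sequence with ratio $\tilde h'(0)$ and a finite limit. A short geometric-series computation turns this into a uniform bound $|\tilde h^k(0)|\le -\tilde h(0)/\tilde h'(0)$ for all $k\ge 0$, so that $\E{X_t-\hat X_t\mid\filtzero}\le -\tfrac{\tilde h(0)}{\tilde h'(0)}\,\Prob(t\ge T\mid\filtzero)$. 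Adding this to the bounds on $\E{\hat X_t}$ and $\E{\hat X_t\mid\filtzero}$, and bounding $\Prob(t\ge T\mid\filtzero)\le 1$ for the pathwise version, yields the two claimed inequalities.

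The step I expect to be the main obstacle is exactly this last uniform estimate on $\tilde h^k(0)$ and matching its value with the constant $\tilde h(0)/\tilde h'(0)$ in the statement. A naive ``each step after $T$ loses $|\tilde h(0)|$ of drift'' argument only gives a bound growing with~$k$, so one genuinely needs the contraction supplied by concavity — the derivative of $\tilde h$ being largest, namely $\tilde h'(0)$, at the left end — together with a careful telescoping; this is where ``differentiable'', $\tilde h'(0)\in\,]0,1]$, and the affine extension all enter, and one must only evaluate $\tilde h$ on the range actually visited by $\hat X_t$ so the extension is enough. A secondary nuisance is keeping the events $\{t<T\}$ and $\{t\ge T\}$ straight when checking (D-ut') for $\hat X_t$, since (D-lt) is assumed only on the former.
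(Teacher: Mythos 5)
Your overall strategy (continue the process past the target so that the unlimited-time Theorem~\ref{thm:directDrift} applies, then pay for the discrepancy) is structurally close to the paper's proof, but the step you yourself flag as the main obstacle is where the argument genuinely breaks. On $x\le 0$ your extended $\tilde{h}$ \emph{is} the tangent line $L(x)=\tilde{h}(0)+\tilde{h}'(0)x$, so there is no domination slack: the iterates from $0$ are exactly the partial geometric sums $\tilde{h}^k(0)=\tilde{h}(0)\bigl(1+\tilde{h}'(0)+\dots+\tilde{h}'(0)^{k-1}\bigr)$. For $\tilde{h}'(0)<1$ these decrease to the fixed point $\tilde{h}(0)/(1-\tilde{h}'(0))=-h(0)/h'(0)$, not to $\tilde{h}(0)/\tilde{h}'(0)$; your claimed uniform bound $|\tilde{h}^k(0)|\le -\tilde{h}(0)/\tilde{h}'(0)$ holds only if $\tilde{h}'(0)\le 1/2$. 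For $\tilde{h}'(0)=1$, which the hypothesis explicitly allows, $\tilde{h}^k(0)=k\,\tilde{h}(0)$ diverges, so the gap $X_t-\hat{X}_t$ is unbounded in $t-T$ and no constant correction term can be extracted. Hence, as written, your route can at best prove the statement with $h(0)/h'(0)$ in place of $h(0)/\tilde{h}'(0)$, and only under the extra assumption $\tilde{h}'(0)<1$. This is not a cosmetic loss: in the \lo application one has $h(0)=\Theta(1/n)$ and $h'(0)=\Theta(1/n^2)$, so your constant is $\Theta(n)$ while the theorem's is $\Theta(1/n)$, and an additive error of order $n$ would make Theorem~\ref{thm:leadingOnesWithDirectDrift} vacuous.

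The paper's construction avoids exactly this trap by never letting the surrogate sink below the true floor. It shifts the process up by $m=-\tilde{h}(0)/\tilde{h}'(0)$, and after $X$ hits $0$ the auxiliary process $Y_t$ parks at height $m$ and jumps to the absorbing state $0$ with probability $\tilde{h}'(0)$ per step, so that the drift at $m$ is exactly $m\,\tilde{h}'(0)=h(0)$; the drift function is extended linearly through the origin with $h_0(0)=0$, the offset $m$ passes through the conjugation $\tilde{h}_0(x)=\tilde{h}(x-m)+m$ on $[m,\infty)$, and the discrepancy between $Y_t$ and $X_t+m\cdot\mathds{1}[t<T]$ is at most $m$ by construction. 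In other words, the constant $-\tilde{h}(0)/\tilde{h}'(0)$ comes from calibrating a \emph{stochastic} one-jump tail against the slope $\tilde{h}'(0)$, not from relaxing the deterministic iteration to the tangent's fixed point, which sits at the different (and typically far larger) depth $h(0)/h'(0)$. To salvage your version you would have to stop the surrogate at depth $m$ (or replace the continuation map by one whose fixed point is $-m$) and re-verify the drift and concavity conditions there, which essentially reconstructs the paper's auxiliary process.
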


\begin{proof}
We let $m = - \frac{\tilde{h}(0)}{\tilde{h}'(0)}$. Recall that we assume that $0 < \tilde{h}'(0) \leq 1$. We now define a new process which mimics $(X_t)_t$, but which has to make one additional step down after reaching $0$. In order to have $0$ be the target of this new process, we will shift the old process accordingly. We let $Y_0 = X_0 + m$ and, for all $t \geq 0$,
$$
Y_{t+1} = 
\begin{cases}
	X_{t+1} + m,		&\mbox{if }t+1 < T;\\
	0,							&\mbox{else if } Y_t = 0;\\
	0,							&\mbox{else, with probability }\tilde{h}'(0);\\
	m,							&\mbox{otherwise.}
\end{cases}
$$
Intuitively, $Y_t$ behaves like $X_t + m$, but once $X_t$ hits the optimum, it will stay at $m$ until, with probability $\tilde{h}'(0)$, it jumps to $0$. We now give a drift function for this process in order to apply Theorem~\ref{thm:directDrift}.

Define $h_0$ such that, for all $x \geq 0$,
$$
h_0(x) = 
\begin{cases}
	x \cdot \tilde{h}'(0),	&\mbox{if }x < m;\\
	h(x-m),						&\mbox{otherwise.}
\end{cases}
$$
To see that $h_0$ is convex, note that it is convex on both the parts less than $m$ and above $m$; furthermore, the left- and right-derivative in $m$ coincide. Furthermore, $h_0$ is greed-admitting since it is differentiable with derivative $\tilde{h}'(0) \leq 1$ for all $x \leq m$ and with derivative at most $1$ for $x > m$ from $h$ being greed-admitting.
We see that
$
\E{Y_t - Y_{t+1} \mid \filtt} \geq h_0(X_t),
$
either by the corresponding statement about $(X_t)_t$ and $h$ or by the drift in case of $Y_t = m$ being 
$
m \cdot \tilde{h}'(0) = - \tilde{h}(0) = h(0) = h_0(m).
$
Thus, we can apply Theorem~\ref{thm:directDrift} and get
$
\E{Y_t \mid \filtzero} \leq \tilde{h_0}^t(Y_0).
$
By induction we get
$
\tilde{h_0}^t(Y_0) = \tilde{h_0}^t(X_0 + m) = \tilde{h}^t(X_0) + m.
$
From $X_t \leq Y_t - m \cdot \mathds{1}[t<T]$ we thus get
\begin{align*}
\E{X_t \mid \filtzero} 
 & \leq \E{Y_t \mid \filtzero} - m \cdot \Prob(t<T \mid \filtzero)\\
 & \leq \tilde{h_0}^t(Y_0) - m \cdot \Prob(t<T \mid \filtzero)\\
 & = \tilde{h}^t(X_0) + m - m \cdot \Prob(t<T \mid \filtzero)\\
 & = \tilde{h}^t(X_0) +  m \cdot \Prob(t \geq T \mid \filtzero).
\end{align*}
This concludes the proof.
\end{proof}

With the following theorem we give a general way of iterating a greed-admitting function, as necessary for the application of the previous two theorems. From this we can see the similarity of this approach to the method of variable drift theory where the inverse of $h$ is integrated over, see Theorem~\ref{theo:variable-fixed-budget-expect} and the discussion about drift theory in general in~\cite{Lengler2020}.

\begin{theorem}\label{thm:estimateTildeH}
Let $h$ be greed-admitting and let $\tilde{h} = \mathrm{id} - h$. Then we have, for all starting points $n$ and all target points $m < n$ and all time budgets $t$,
$$
\mbox{if } t \geq \sum_{i=m}^{n-1} \frac{1}{h(i)} \mbox{ then } \tilde{h}^t(n) \leq m.
$$
\end{theorem}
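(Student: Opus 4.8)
The plan is to imitate the variable drift theorem: I would introduce a potential $g$ on which one application of $\tilde h$ costs at least one unit, and then iterate $t$ times. Concretely, I would take
\[
g(x) \;=\; \int_m^x \frac{1}{h(y)}\,dy ,
\]
defined for $x \ge m$, which is the familiar ``expected time to travel down to $m$'' potential underlying variable drift. Since $h$ is positive, $g$ is strictly increasing with $g(m)=0$, hence $g(x)>0$ for every $x>m$.

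The heart of the argument is a one-step estimate: for $x>m$ with $\tilde h(x)\ge m$,
\[
g(x)-g(\tilde h(x)) \;=\; \int_{\tilde h(x)}^{x}\frac{1}{h(y)}\,dy \;\ge\; \frac{x-\tilde h(x)}{h(x)} \;=\; \frac{h(x)}{h(x)} \;=\; 1 ,
\]
where the inequality uses that $h(y)\le h(x)$ throughout the swept interval $[\tilde h(x),x]=[x-h(x),x]$, \ie\ that $h$ is non-decreasing. In the same vein, since $1/h$ is then non-increasing, $\int_i^{i+1}\frac{1}{h(y)}\,dy\le 1/h(i)$, and summing over $i=m,\dots,n-1$ gives $g(n)\le\sum_{i=m}^{n-1}1/h(i)\le t$ by hypothesis.

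To finish, I would write $x_k=\tilde h^k(n)$ and note that $(x_k)_k$ is strictly decreasing because $h>0$. Assume for contradiction that $x_t>m$. Then $x_k\ge x_t>m$ for all $k\le t$, so in particular $\tilde h(x_k)=x_{k+1}\ge x_t>m$ for every $k<t$, and the one-step estimate is applicable at each of the first $t$ steps. Telescoping them yields
\[
0 \;<\; g(x_t) \;\le\; g(x_0)-t \;=\; g(n)-t \;\le\; 0 ,
\]
a contradiction. Hence $\tilde h^t(n)=x_t\le m$, which is the claim.

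The part I expect to be the real obstacle is the justification of the one-step estimate (and, identically, of the bound $g(n)\le\sum_i 1/h(i)$): both rely on $h$ being non-decreasing --- equivalently, $1/h$ non-increasing --- so that on the interval a single step sweeps out, the reciprocal drift is never smaller than $1/h(x)$. This is the ``points closer to the goal have weaker drift'' half of the greed-admitting intuition, and it does not follow from the bare condition that $\mathrm{id}-h$ is non-decreasing; I would therefore invoke it explicitly. Everything else --- positivity of $g$ above $m$ and the telescoping bookkeeping, including the harmless possibility that a step lands exactly at or below $m$ --- is routine.
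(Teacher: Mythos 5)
Your proof is correct and, in substance, proves exactly what the paper's argument proves, but by a different mechanism: the paper runs a discrete induction over unit intervals (``each application of $\tilde h$ on a value $\geq i$ gains at least $h(i)$, so $1/h(i)$ applications take a value $\leq i+1$ down to $\leq i$''), whereas you introduce the variable-drift potential $g(x)=\int_m^x 1/h(y)\,\mathrm{d}y$, show a one-step decrease of at least $1$ in $g$, and telescope. Your route buys something concrete: because progress is accumulated in $g$, fractional values of $1/h(i)$ carry over automatically between unit intervals, a bookkeeping point that the paper's chunk-by-chunk induction glosses over (one cannot in general partition $t$ integer steps into blocks of length $\geq 1/h(i)$ even when $t \geq \sum_i 1/h(i)$); the price is that you need $h$ defined and integrable on the whole interval $[m,n]$, not just at the iterates, which is harmless in the paper's applications. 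Your closing concern is also well founded, and it applies equally to the paper's own proof: the inequality $h(y)\le h(x)$ for $y\in[x-h(x),x]$ (equivalently, ``gain at least $h(i)$ whenever the value is $\geq i$'') is the assumption that $h$ is non-decreasing, and this does \emph{not} follow from greed-admittance, nor even from convexity. Indeed the statement is false without it: take $h(x)=10-x$ on $[0,9]$, which is positive, convex and greed-admitting ($\tilde h(x)=2x-10$ is non-decreasing); with $n=9$, $m=5$ one has $\sum_{i=5}^{8}1/h(i)\approx 1.28\le 2$, yet $\tilde h^2(9)=\tilde h(8)=6>5$. So making the monotonicity of $h$ an explicit hypothesis, as you do, is the right call; in the paper's applications ($h$ increasing in the distance for \OneMax and \LeadingOnes) it is satisfied, so no downstream result is affected.
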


\begin{proof} The idea of this proof is that each application of $\tilde{h}$ on some value $\geq i$ gains at least $h(i)$, so gaining this amount at least $1/h(i)$ times decreases a value of at most $i+1$ to a value of at most $i$. A simple induction then gives the claimed result.
More formally, for all $i$ and all $t \geq 1/h(i)$, we have that $\tilde{h}^t(i+1) \leq i$. Thus we inductively get, for all $k$, if $t \geq \sum_{i=n-k}^{n-1} 1/h(i)$, then $\tilde{h}^t(n) \leq n-k$. Using the induction statement for $k = n-m$ gives the result.
\end{proof}

\subsection{Application to OneMax}

In this section we show how we can apply Theorem~\ref{thm:directDrift} by using the optimization of the \ea on \om as an example (where we have multiplicative drift).

\begin{theorem}\label{thm:oneMaxWithDirectDrift}
Let $V_t$ be the number of $1$s which the \ea on \OneMax has found after $t$ iterations of the algorithm. Then we have, for all $t$,
$$
\E{V_t} \geq 
\begin{cases}
\frac{n}{2} + \frac{t}{2\sqrt{e}}-O(1),				&\mbox{if }t = O(\sqrt{n});\\
\frac{n}{2} + \frac{t}{2\sqrt{e}}(1-o(1)),			&\mbox{if }t = o(n).
\end{cases}
$$
Furthermore, for all $t$, we have $\E{V_t} \geq n(1 - \exp(-t/(en))/2)$.
\end{theorem}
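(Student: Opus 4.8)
The plan is to study the fitness distance $X_t \coloneqq n - V_t$ (the number of zero-bits after $t$ iterations) and invoke the unlimited-time direct drift theorem (Theorem~\ref{thm:directDrift}). Since the starting point is uniform, $\E{X_0} = n/2$. First I would establish a simple lower bound on the one-step drift: conditioned on $X_t = k \ge 1$, consider the event that the mutation flips at least one of the $k$ zero-bits and no one-bit; then the offspring is accepted and its fitness distance equals $k$ minus the number of flipped zero-bits, so $\E{X_t - X_{t+1}\mid\filtt} \ge (k/n)(1-1/n)^{n-k} = h(X_t)$, with $h(x)\coloneqq (x/n)(1-1/n)^{n-x}$ being the product of the expected number of flipped zero-bits and the probability of flipping no one-bit. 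For $k=0$ the bound holds trivially since $h(0)=0$ and the process is absorbed at $0$, so the drift condition (D-ut) holds for all $t$.

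Next I would check that $h\colon[0,n]\to\R^{\ge 0}$ satisfies the hypotheses of Theorem~\ref{thm:directDrift}. Writing $h(x) = C x e^{\alpha x}$ with $C = (1-1/n)^n/n > 0$ and $\alpha = -\ln(1-1/n) > 0$, convexity is immediate from $h''(x) = C\alpha e^{\alpha x}(2+\alpha x) > 0$ for $x \ge 0$; and since $h'$ is increasing, its maximum over $[0,n]$ is $h'(n) = (1 - n\ln(1-1/n))/n$, which is at most $1$ for all sufficiently large $n$ (in fact for $n \ge 3$), so $h$ is greed-admitting on $[0,n]$. Theorem~\ref{thm:directDrift} then gives $\E{X_t} \le \tilde h^t(\E{X_0}) = \tilde h^t(n/2)$ with $\tilde h = \mathrm{id} - h$; equivalently one could route the estimate of $\tilde h^t(n/2)$ through Theorem~\ref{thm:estimateTildeH}, but a direct iteration is cleaner here.

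For the last claimed bound I would use the coarser estimate $h(x) \ge x/(en)$, valid because $(1-1/n)^{n-x} \ge (1-1/n)^{n-1} \ge 1/e$ for $x \ge 1$ (and with equality at $x=0$); applying Theorem~\ref{thm:directDrift} instead with the linear, convex, greed-admitting drift $g(x) = x/(en)$ yields $\E{X_t} \le (1-1/(en))^t \cdot n/2 \le e^{-t/(en)}\, n/2$, \ie $\E{V_t} \ge n(1 - e^{-t/(en)}/2)$. For the two sharper bounds I would observe that the iterates $x_k \coloneqq \tilde h^k(n/2)$ form a non-increasing sequence in $(0,n/2]$ with $x_k \ge n/2 - t$ for all $k \le t-1$ (each step loses at most $h \le 1/2$ on $[0,n/2]$); since $h$ is increasing, each of the first $t$ steps loses at least $h(n/2-t)$, whence $\tilde h^t(n/2) \le n/2 - t\,h(n/2-t)$ and $\E{V_t} \ge n/2 + t\,h(n/2-t)$. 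Expanding $h(n/2-t) = (1/2 - t/n)(1-1/n)^{n/2+t}$ via $\ln(1-1/n) = -1/n - O(1/n^2)$ gives $h(n/2-t) = \tfrac{1}{2\sqrt e}(1 - O(t/n))$: for $t = O(\sqrt n)$ this is $\tfrac{1}{2\sqrt e} - O(1/\sqrt n)$, so $t\,h(n/2-t) = \tfrac{t}{2\sqrt e} - O(1)$, and for $t = o(n)$ it is $\tfrac{1}{2\sqrt e}(1-o(1))$, so $t\,h(n/2-t) = \tfrac{t}{2\sqrt e}(1-o(1))$.

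The drift estimate and the two applications of Theorem~\ref{thm:directDrift} are routine. The steps needing care are the verification that $h' \le 1$ holds on all of $[0,n]$ — it fails for $n = 2$, but that is harmless for an asymptotic statement — and, above all, the error bookkeeping in the two regimes: I expect the delicate point to be showing that for $t = O(\sqrt n)$ the error accumulated over the $t$ steps stays bounded by $O(1)$ rather than growing linearly in $t$, which relies on estimating $(1-1/n)^{n/2+t}$ and $t/n$ simultaneously but is ultimately elementary.
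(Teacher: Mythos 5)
Your proposal is correct, and its skeleton coincides with the paper's: same fitness-distance process $X_t=n-V_t$, same drift function $h(x)=(1-1/n)^{n-x}x/n$, and the same unlimited-time Theorem~\ref{thm:directDrift} (your verification that $h$ is convex and greed-admitting for $n\ge 3$ is in fact more careful than the paper's, which merely asserts it). Where you genuinely differ is in how $\tilde{h}^t(n/2)$ is estimated. The paper routes this through Theorem~\ref{thm:estimateTildeH}: it bounds $\sum_{i=n/2-c}^{n/2} 1/h(i)$ by $2\sqrt{e}\,c+O(1)$ (resp.\ $(1+o(1))\,2\sqrt{e}\,c$) via harmonic-sum estimates and reads off the largest distance $c$ reachable within budget $t$. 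You instead bound the iteration directly: the iterates stay above $n/2-t$ during the first $t$ steps, $h$ is increasing, so each step gains at least $h(n/2-t)$, giving $\E{V_t}\ge n/2+t\,h(n/2-t)$, and then you expand $h(n/2-t)=\tfrac{1}{2\sqrt{e}}\bigl(1-O(t/n+1/n)\bigr)$; this error bookkeeping does give $-O(1)$ for $t=O(\sqrt{n})$ and a $(1-o(1))$ factor for $t=o(n)$, so the two claimed regimes follow. Your per-step argument is more elementary (no sum estimates), but it is tailored to the regime $t\le n/2$ and to $h$ being increasing, whereas the paper's detour via Theorem~\ref{thm:estimateTildeH} is the tool that scales to budgets close to the optimization time (it is what yields, e.g., the $n\ln(1+2t/n^2)-O(1)$ bound for \lo). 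For the ``furthermore'' clause the paper shows $\tilde{h}(x)\le(1-1/(en))x$ and iterates, while you feed the linear minorant $g(x)=x/(en)$ of the drift into Theorem~\ref{thm:directDrift} directly; this is the same computation in slightly cleaner packaging, and is rigorous since the drift condition only needs to hold at the integer states, where $h(k)\ge k/(en)$ is valid for $k\ge 1$ and trivial at $k=0$.
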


\begin{proof} We can apply the unlimited time theorem (Theorem~\ref{thm:directDrift}) to the \ea on \om by using the drift function $h(x) = (1-1/n)^{n-x} \frac{x}{n}$. This function is convex and greed-admitting, and it also applies in case the process already reached the optimum of $0$ (since $h(0)=0$). We now need to estimate $\tilde{h}^t$.

In order to apply Theorem~\ref{thm:estimateTildeH}, we estimate as follows (using estimates for the harmonic sum which use $c = o(n)$).
\begin{align*}
 & \sum_{i=n/2-c}^{n/2} \frac{1}{h(i)}
    = \sum_{i=n/2-c}^{n/2} (1-1/n)^{i-n} \frac{n}{i}\\
 & \leq n \sum_{i=n/2-c}^{n/2} (1-1/n)^{-c-n/2} \frac{1}{i}\\
 & \leq n (1-1/n)^{-c-n/2} \sum_{i=n/2-c}^{n/2} \frac{1}{i}\\
 & \leq n \exp( (c+n/2)/n ) \left(\ln(n/2) - \ln(n/2-c) + O(1/n)\right)\\
 & = n \sqrt{e} e^{c/n} \left(- \ln\left((n/2-c)/(n/2)\right) + O(1/n)\right)\\
 & = n \sqrt{e} e^{c/n} \left(- \ln(1 - 2c/n) + O(1/n)\right)\\
 & \leq n \sqrt{e} (1+c/n + c^2/n^2) \left(2c/n + O(1/n)\right)\\
 & = 2 \sqrt{e} (1+c/n + c^2/n^2) \left(c + O(1)\right).
\end{align*}
For $c = =(\sqrt{n})$ the last term is at most $2\sqrt{e}c+O(1)$ and for $c = o(n)$ it is $2\sqrt{e}c(1+o(1))$.
Thus, we get the claimed bounds with Theorem~\ref{thm:estimateTildeH}.
Regarding the ``furthermore'' clause, we argue more directly about $\tilde{h}^t$ by observing that, for all $x$, $\tilde{h}(x) \leq (1-1/(en))x$ and thus, by a straightforward induction (similar to the proof of~\cite[Theorem~1]{LenglerSpoonerFOGA15}) we get, for all $x,t$, $\tilde{h}^t(x) \leq (1-1/(en))^tx \leq e^{-t/(en)}x$. This gives the desired result with initial state $n/2$.
\ignore{Let now $m = n/2-c$ and let $p$ be such that $m = pn$. Then we continue the above estimation, instead of estimating the logarithm linearly, as follows.
\begin{align*}
 & n \sqrt{e} e^{c/n} \left(- \ln(1 - 2c/n) \pm O(1/n)\right)\\
 & \leq n e \left(\ln(1/2p) \pm O(1/n)\right).
\end{align*}
We want to argue with Theorem~\ref{thm:estimateTildeH} again. We see that this last expression is less than some $t$ if $p > \exp(-t/(en))/2$, which shows the ``furthermore'' clause. 
}\end{proof}

\subsection{Application to LeadingOnes}

In this section we want to use Theorem~\ref{thm:directDriftLimited} to the progress of the \ea on \LeadingOnes. The result is summarized in the following theorem.
\begin{theorem}\label{thm:leadingOnesWithDirectDrift}
Let $V_t$ be the number of leading $1$s which the \ea on \LeadingOnes has found after $t$ iterations of the algorithm. We have, for all $t$,
$$
\E{V_t} \geq 
\begin{cases}
\frac{2t}{n} - O(1),										&\mbox{if }t=O(n^{3/2});\\
\frac{2t}{n} \cdot (1 - o(1)),							&\mbox{if }t=o(n^2);\\
n\ln(1+\frac{2t}{n^2}) - O(1),							&\mbox{if }t \leq \frac{e-1}{2} n^2 - n^{3/2}.
\end{cases}
$$
\end{theorem}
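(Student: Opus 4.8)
The plan is to apply the limited-time direct drift theorem (Theorem~\ref{thm:directDriftLimited}) to the fitness distance $X_t := n - \lo(x_t)$, whose exact one-step drift is given by Lemma~\ref{lem:summary-lo}(a): for $X_t \geq 1$ we have $\E{X_t - X_{t+1}\mid\filtt} = (2 - 2^{1-X_t})(1-1/n)^{n-X_t}/n =: h^*(X_t)$. Since $X_0 = n - \lo(x_0) \leq n$ with $\E{\lo(x_0)} = O(1)$, and since the optimization time $T = \min\{t\mid X_t = 0\}$ has expectation $\frac{e-1}{2}n^2 \pm O(n)$ and is concentrated by Theorem~\ref{theo:djwz}, bounding $\E{X_t}$ from above immediately yields the claimed lower bound on $\E{V_t} = n - \E{X_t}$. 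To invoke Theorem~\ref{thm:directDriftLimited} I need a convex, greed-admitting, differentiable $h$ with $h(k) \leq h^*(k)$ for all integers $k \in \{1,\dots,n\}$ and $\tilde h'(0)\in\;]0,1]$; then Theorem~\ref{thm:estimateTildeH} turns the resulting bound $\E{X_t}\leq \tilde h^t(\E{X_0})+\tilde h(0)/\tilde h'(0)$ into $\E{X_t}\leq m$ as soon as $t \geq \sum_{i=m}^{n-1}1/h(i)$, because $\tilde h$ is monotone and $\tilde h^t(\E{X_0})\leq \tilde h^t(n)$; the error term $\tilde h(0)/\tilde h'(0) = -h(0)/(1-h'(0))$ will be $-\Theta(1/n)$, hence negligible. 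It remains to choose $m$ minimal with $\sum_{i=m}^{n-1}1/h(i)\leq t$ and conclude $\E{V_t}\geq n-m-o(1)$.

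The function $h^*$ is convex and greed-admitting on a range $[x_0,n]$ with $x_0 = \Theta(\log n)$ (writing $h^*(x) = A(e^{px}-e^{-qx})$ with $A = 2(1-1/n)^n/n$, $p = -\ln(1-1/n)$, $q = \ln 2 - p$, its second derivative $A(p^2e^{px}-q^2e^{-qx})$ changes sign exactly once, and $(h^*)' = O(1/n) < 1$ throughout). We let $h$ coincide with $h^*$ on $[x_0,n]$ and extend it to the left by a convex, greed-admitting piece that stays below $h^*$ at the integers $1,\dots,x_0-1$ (e.g.\ the lower convex envelope; its left slope is $O(1/(n\log n))$, so $h(0) = \Theta(1/n) > 0$ and $\tilde h'(0) = 1 - h'(0)\in\;]0,1]$). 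For the key sum we use the geometric identity $\sum_{i=m}^{n-1}(1-1/n)^{i-n} = n\bigl((1-1/n)^{m-n}-1\bigr)$ together with $2 - 2^{1-i} = 2 - o(1)$ whenever $i$ is large, so that on a summation range lying above $x_0$ one gets $\sum_{i=m}^{n-1}1/h(i) = (1\pm o(1))\tfrac{n^2}{2}\bigl((1-1/n)^{m-n}-1\bigr) + o(1)$.

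For budgets $t = o(n^2)$ the target $m = n - s$ has $s = \Theta(\E{V_t}) = o(n)$, hence $m = (1-o(1))n$, which is well above $x_0$ (and above the lift-off point of the envelope), so $h = h^*$ on the whole summation range. With $(1-1/n)^{m-n} - 1 = (1-1/n)^{-s} - 1 = \tfrac sn(1+o(1))$ the sum equals $\tfrac{ns}{2}(1+o(1))$, so the constraint $\sum \leq t$ forces $s \leq \tfrac{2t}{n}(1-o(1))$; when moreover $t = O(n^{3/2})$, so $s = O(\sqrt n)$, the accumulated multiplicative slack is only $O(1)$ additively, giving $s \geq \tfrac{2t}{n} - O(1)$. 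This is exactly the first two cases.

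The third case is the delicate one, and its treatment is the main obstacle. For $t$ up to $\frac{e-1}{2}n^2 - n^{3/2}$ the target shrinks to $m = \Theta(\sqrt n)$, and inverting $\sum_{i=m}^{n-1}1/h^*(i) = \tfrac{n^2}{2}\bigl((1-1/n)^{m-n}-1\bigr) + o(1)$ (using $(1-1/n)^{-n} = e + O(1/n)$) gives precisely $n - m = n\ln(1+\tfrac{2t}{n^2}) - O(1)$, and the hypothesis $t\leq\frac{e-1}{2}n^2 - n^{3/2}$ is exactly what keeps $m = \Omega(\sqrt n)$, i.e.\ keeps the relevant summation indices inside the convex regime $[x_0,n]$ of $h^*$. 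The difficulty is that $h^*$ is \emph{not} convex below $x_0$: any globally convex lower bound is forced, by convexity together with $h(1)\leq h^*(1)\approx A/2$, to undershoot $h^*$ by a constant factor on the whole range $[\Theta(\sqrt n),\Theta(n)]$ of summation indices, which would ruin the $O(1)$ error. The remedy I would use is to never feed the drift theorem a value of $X_t$ from the concave region: apply Theorem~\ref{thm:directDriftLimited} not to $X_t$ but to a process coupled to agree with $X_t$ while $X_t$ stays above a threshold of order $\sqrt n$ (so that on the relevant domain the exact, hence convex, drift $h^*$ is available up to negligible jump-correction terms that are bounded via Lemma~\ref{lem:summary-lo}(c)), and then use the concentration of $T$ (Theorem~\ref{theo:djwz}), whose expectation exceeds $t$ by at least $n^{3/2}$, to show that the contribution of the event that $X_t$ nevertheless enters the concave region is $O(1)$. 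Once this coupling is in place the remaining harmonic-sum estimate and its inversion are the routine calculation sketched above; carrying the coupling through while keeping every error term at $O(1)$ is the real work.
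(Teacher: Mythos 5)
Your overall skeleton (Theorem~\ref{thm:directDriftLimited} plus Theorem~\ref{thm:estimateTildeH} plus Theorem~\ref{theo:djwz}, then invert the geometric sum) matches the paper, but the way you handle the non-convexity of the exact drift contains a genuine gap, and it is precisely the point where the paper does something different. Your proposed $h$ cannot exist: if a convex function coincides with $h^*$ on $[x_0,n]$, then by the subgradient inequality it lies above the tangent of $h^*$ at $x_0$ for all $x<x_0$, and since $h^*$ is concave on $[0,x_0]$ that tangent lies \emph{above} $h^*$ there; hence any such extension is strictly above $h^*$ on $\{1,\dots,x_0-1\}$ and violates the drift condition (at $X_t=1$ the true drift is $\approx 1/(en)$ while such an $h$ forces $\approx 2/(en)$). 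The genuine lower convex envelope, which you offer as an example, does not coincide with $h^*$ down to $\Theta(\log n)$ either; it lifts off only at a constant fraction of $n$. The paper sidesteps all of this by a small but essential trick you did not find: it artificially assigns the all-ones string the fitness value $n+1$, so that conditional on any improvement the expected gain is at least $2$ in \emph{every} non-optimal state. The resulting drift bound $h(x)=2(1-1/n)^{n-x}/n$ is then globally convex and greed-admitting, the modification costs at most $1$ (absorbed by the $O(1)$ terms), and the overshoot correction $-\tilde h(0)/\tilde h'(0)$ in Theorem~\ref{thm:directDriftLimited} is only $\Theta(1/n)$, so no thresholding or conditioning is needed at all.

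This matters most for the third case, where your repair is both unexecuted (as you concede) and, as sketched, quantitatively insufficient. For $t$ up to $\frac{e-1}{2}n^2-n^{3/2}$ the deviation available to Theorem~\ref{theo:djwz} is only $\Theta(n^{3/2})$ with constant $1$, which yields at best a constant-probability (in fact nearly vacuous) bound for the event that the process has descended to any threshold $B$ (whether $B=\Theta(\sqrt n)$ or $B=\Theta(\log n)$, since the remaining time from distance $B$ is only $\Theta(nB)$). A process stopped or decoupled at level $B$ then incurs an additive error of order $B\cdot\Prob(\text{threshold reached by }t)=\Omega(\log n)$, not $O(1)$; there is no room to make this rare enough, which is why the paper does not attempt it and instead makes the drift bound convex everywhere. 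Your first two cases could be salvaged inside your framework (for $m=n-o(n)$ even the true convex envelope is tight enough on the summation range), but as written they too rest on the impossible choice of $h$, so the proof as a whole does not go through without the paper's fitness-modification idea or an equivalent device.
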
 

\begin{proof} For the derivation of fitness drift of the \ea on \LeadingOnes, see the first item of Lemma~\ref{lem:summary-lo}. We want to use Theorem~\ref{thm:directDriftLimited} to get our fixed-budget result.

However, in order to make our analysis, we artificially change the fitness value of the all-$1$s string to $n+1$ (rather than $n$). The result of this change is in the expected fitness gain: if any fitness is gained at all, the total gain is usually (for plain \LeadingOnes) $1$ plus the number of ``free rider'' bits, additional bits after the first that happen to be set to $1$. There cannot be an arbitrary number of them (since the bit string is finite -- of size $n$), so the total expected number of bits gained is slightly less than $2$: it is $2-(1/2)^{n-1-x}$. By artificially changing the fitness value of the perfect string to $n+1$ we now have an expected value increase of at least $2$, as long as the best bit string has not been found (conditional on making an improvement at all).

Note that this change of the fitness value of the all-$1$s string changes the final result only by at most $1$, which is consumed by the $O$-notation.

Thus, we can use the drift function 
$$
h\colon [0,n] \to \R^{> 0}, x \mapsto \left(1-\frac{1}{n}\right)^{n-x}\frac{2}{n}.
$$
We have that $h$ is greed-admitting (since the drift changes only very little, it would have to change by more than $1$ between two distance $1$ states) and convex (since the exponential function is convex). Note that without the artificial change mentioned above the actual drift would not have been convex.

We let $\tilde{h}(x) = x-h(x)$. We want to aplly Theorems~\ref{thm:directDriftLimited} so we note that
$$
\tilde{h}(0) = - \left(1-\frac{1}{n}\right)^{n}\frac{2}{n} = - \Theta(1/n)
$$
and
$$
\tilde{h}'(0) = 1 + \left(1-\frac{1}{n}\right)^{n}\frac{2}{n} \ln(1-1/n) = 1 - o(1).
$$
In order to estimate the $t$-fold application of $\tilde{h}$ on $0$ we use Theorem~\ref{thm:estimateTildeH}. Let $c = n-m$. We have
\begin{align*}
\sum_{i=m}^{n-1} \frac{1}{h(i)}
 & = \sum_{i=m}^{n-1} \left(1-\frac{1}{n}\right)^{i-n}\frac{n}{2}\\
 & = \frac{n}{2} \sum_{j=1}^{c} \left(1-\frac{1}{n}\right)^{-j}\\
 & = \frac{n}{2} \left( \frac{\left(1-\frac{1}{n}\right)^{-c-1}-1}{1/(1-1/n)-1} - 1\right)\\
 & = \frac{n(n-1)}{2} \left( \left(1-\frac{1}{n}\right)^{-c-1} - 1 - \frac{1}{n-1}\right)\\
 & \leq \frac{n(n-1)}{2} \left( \exp\left(\frac{c+1}{n}\right) - 1 - \frac{1}{n-1}\right).
\end{align*}
From this we already get the third and most general claimed bound using the concentration bound given in Theorem~\ref{theo:djwz} with an appropriate $d = \Theta(n^{3/2})$, where the probability of having reached the optimum is some constant.

We can continue the estimates as
\begin{align*}
 & \leq \frac{n(n-1)}{2} \left( \frac{c+1}{n} + \left(\frac{c+1}{n}\right)^2 - \frac{1}{n-1}\right)\\
 & \leq \frac{n-1}{2} \left( c+1 + \frac{(c+1)^2}{n} - \frac{1}{n-1}\right).
\end{align*}
This term is at most $(c+1)n/2+o(n)$ for $c = o(\sqrt{n})$; and $cn/2+O(n)$ for $c=\Theta(\sqrt{n})$ and $(1+o(1))cn/2$ for $c=o(n)$.

\ignore{
, first note that $\tilde{h}^t(n)$ is lower bounded by $n-2t/n$. This means that, for $t \leq n/2$ steps, $\tilde{h}^t(0) \geq n-1$, for the next $n/2$ steps, so for $t \in [2/n+1..4/n]$, we have $\tilde{h}^t(0) \geq n-2$. In general, for all $i$, for $t \in [2i/n+1..2(i+1)/n]$, we have $\tilde{h}^t(0) \geq n-i-1$.

Since $h$ is monotone decreasing we get, for all $i$, for $t+1 \in [2i/n+1..2(i+1)/n]$
$$
\tilde{h}^{t+1}(n) = \tilde{h}^{t}(n) - h(\tilde{h}^{t}(n)) \leq \tilde{h}^{t}(n) - h(n-i-1).
$$
We now use this idea iteratively and get, for all $c$ by considering phases of length $n/2$,
\begin{align*}
n- \tilde{h}^{cn/2}(n)
 & \geq \sum_{t=1}^{cn/2} h(n-\lceil 2t/n \rceil)\\
 & \geq \sum_{i=1}^c \frac{n}{2} \cdot h(n-i)\\
 & = \sum_{i=1}^c \frac{n}{2} \cdot \left(1-\frac{1}{n}\right)^{i}\frac{2}{n}\\
 & = \sum_{i=1}^c \left(1-\frac{1}{n}\right)^{i}\\
 & = \frac{1-\left(1-\frac{1}{n}\right)^{c}}{1-(1-1/n)} - 1\\
 & = n\left( 1-\left(1-\frac{1}{n}\right)^{c}\right) - 1\\
 & \geq n\frac{c/n}{1+c/n} - 1\\
 & = c\frac{1}{1+c/n} - 1.
\end{align*}

This term is at least $c-o(1)$ for $c = o(\sqrt{n})$; $c-O(1)$ for $c=\Theta(\sqrt{n})$ and $c+o(c)$ for $c=o(n)$.

This shows that, essentially, after $t$ iterations we expect to have $\frac{2t}{n}$ leading ones.
}

We use Theorem~\ref{theo:djwz} again to see that the \ea on \LeadingOnes is done in $o(n^2)$ steps with probability at most $\exp(-n)$, which suffices to get the first two desired bounds with the help of Theorems~\ref{thm:directDriftLimited} and~\ref{thm:estimateTildeH}.
\end{proof}

\section{Variable Drift Theorem for Fixed Budget}
\label{sec:variableFixedBudget}
We now turn to an alternative approach to derive fixed-budget results via drift analysis. 
Our method is based on variable drift analysis that was introduced 
to the analysis of randomized search heuristics by Johannsen~\cite{Johannsen2010}. 
Crucially, variable drift analysis applies a specific transformation, the so-called 
potential function~$g$, to the state space. Along with bounds on the hitting times,
we obtain the following theorem estimating the expected value of the potential function 
after~$t$ steps. Subsequently, we will discuss how this information can be used 
to analyze the untransformed state.

\begin{theorem}
\label{theo:variable-fixed-budget-expect}
Let $X_t$, $t\ge 0$, be a stochastic process, adapted to a filtration $\filtt$, on 
$S\coloneqq \{0\}\cup \R^{\ge \xmin}$ for some $\xmin>0$. Let $T\coloneqq \min\{t\ge 0\mid X_t=0\}$ and 
$h\colon S\to \R^{> 0}$ be a non-decreasing function such that $\E{X_t-X_{t+1} \mid \filtt; t<T} \ge h(X_t)$.  Define
$g\colon S\to \R$ by 
\[
g(x)\coloneqq 
\begin{cases}
\frac{\xmin}{h(\xmin)} + \int_{\xmin}^x \frac{1}{h(z)} \,\mathrm{d}z & \text{if $x\ge \xmin$}\\
0 & \text{otherwise}
\end{cases}.  
\]
Then it holds that
\[
\E{g(X_t)\mid \filtzero} \le g(X_0) - 
\sum_{s=0}^{t-1} \Prob(s<T).  
\]
\end{theorem}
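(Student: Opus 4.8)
The plan is to establish the bound by the standard variable-drift telescoping argument, adapted to track the expectation of $g(X_t)$ rather than a hitting time. The key inequality to prove is the one-step statement
\[
\E{g(X_{t+1}) \mid \filtt} \le g(X_t) - \Prob(t<T \mid \filtt),
\]
and then the claim follows by taking expectations conditioned on $\filtzero$, using the tower property and summing (telescoping) over $s = 0,\dots,t-1$, noting that $g(X_0)$ is $\filtzero$-measurable.

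For the one-step inequality I would distinguish two cases according to whether $t < T$ or $t \ge T$. On the event $t \ge T$ we have $X_t = 0$ (the process is absorbed at $0$, since if $X_t=0$ then $X_{t+1}=0$; more carefully, $T$ is the first hitting time of $0$ and for $t\ge T$ the relevant term $\Prob(t<T\mid\filtt)$ is $0$, and we need $\E{g(X_{t+1})\mid\filtt}\le g(X_t)$, which holds with equality $0=0$ once absorbed — one should check the process is indeed absorbed, or else argue only on $\{t<T\}$ and handle the complement separately). On the event $t<T$, the substantial work is to show
\[
\E{g(X_t) - g(X_{t+1}) \mid \filtt; t<T} \ge 1.
\]
Here I would use convexity/concavity-free reasoning: since $h$ is non-decreasing, $1/h$ is non-increasing, so $g$ is concave on $[\xmin,\infty)$ and its derivative is $g'(x) = 1/h(x)$. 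For any realized value $x = X_t \ge \xmin$ and any possible next value $y = X_{t+1}$, concavity of $g$ gives $g(x) - g(y) \ge g'(x)\,(x-y) = (x-y)/h(x)$ when $y \ge \xmin$; the case $y = 0$ (a jump to the optimum) needs separate attention because $g$ has a jump-like definition there, but the construction $g(\xmin) = \xmin/h(\xmin) = \int_0^{\xmin} \frac{1}{h(\xmin)}\,\mathrm dz$ is designed precisely so that $g(x) - g(0) = g(x) \ge (x-0)/h(x)$ still holds (extend the chord bound down to $0$ using that $1/h(z) \ge 1/h(x)$ for $z\le x$). Taking conditional expectation and applying the drift bound $\E{x - X_{t+1}\mid\filtt; t<T} \ge h(x)$ then yields $\E{g(x) - g(X_{t+1})\mid\filtt;t<T} \ge h(x)/h(x) = 1$.

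The main obstacle I anticipate is handling the boundary behavior of $g$ at $0$ and the transition region near $\xmin$ cleanly: the potential function is defined piecewise with a gap between $0$ and $\xmin$, so the chord inequality $g(X_t) - g(X_{t+1}) \ge (X_t - X_{t+1})/h(X_t)$ must be verified in all sub-cases (next state in $[\xmin,\infty)$, next state equal to $0$, and — if the state space allowed it — next state in the gap, which it does not since $S = \{0\}\cup\R^{\ge\xmin}$). Once these cases are dispatched, combining them via the law of total expectation over $\{t<T\}$ and its complement, and then telescoping, is routine. A secondary point to get right is that the inequality $\E{g(X_{t+1})\mid\filtt} \le g(X_t) - \Prob(t<T\mid\filtt)$ must be stated and summed at the level of $\filtt$-conditional expectations so that the tower property applies to give the final $\filtzero$-conditioned bound.
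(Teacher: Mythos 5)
Your proposal is correct and follows the same skeleton as the paper's proof: establish the one-step bound $\E{g(X_t)-g(X_{t+1})\mid \filtt; t<T}\ge 1$ and then fold it into the conclusion by conditioning on $\{t<T\}$ versus its complement and telescoping (the paper phrases this last part as an induction via the law of total probability, which is the same computation). The only substantive difference is how the one-step bound is obtained. You use the pathwise tangent/chord inequality $g(x)-g(y)\ge (x-y)/h(x)$ for $y\le x$, checking the case $y=0$ separately via $g(x)\ge \xmin/h(\xmin)+\int_{\xmin}^{x}\frac{1}{h(z)}\,\mathrm{d}z \ge x/h(x)$, and only then take conditional expectations and apply the drift hypothesis; the paper instead takes expectations first and applies Jensen's inequality to the concave $g$, bounding $g(X_t)-g(\E{X_{t+1}\mid\filtt})\ge \int_{X_t-h(X_t)}^{X_t}\frac{1}{h(z)}\,\mathrm{d}z\ge 1$ using that $h$ is non-decreasing. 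Both are standard variable-drift arguments and give the same result; your pathwise version has the small advantage that the jump-to-$0$ case is handled explicitly by the $\xmin/h(\xmin)$ offset, whereas the paper's Jensen step implicitly treats $g$ as the concave extension of the integral formula below $\xmin$ (equivalently, extends $h$ by $h(\xmin)$ there), a detail it does not spell out. Your remark about absorption at $0$ after time $T$ is also apt: the paper's induction likewise uses (implicitly, and later states as a without-loss-of-generality convention) that the process does not leave $0$, so conditioning on $\{t\ge T\}$ contributes nothing; alternatively one restricts the one-step inequality to $\{t<T\}$ exactly as you suggest.
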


\begin{proof} Since $h$ is non-decreasing, $g$ is concave. We 
claim that the drift  of the $g$-value is bounded from below by~$1$, 
formally
\begin{equation}
\E{g(X_t)-g(X_{t+1})\mid \filtt; t<T} \ge 1\label{eq:driftone}
\end{equation}
To prove the claim, we use standard arguments from the proof
 of the variable drift theorem for expected hitting times. 
Expanding the definition of~$g$, we obtain
\begin{align*}
\E{g(X_t)-g(X_{t+1})\mid \filtt; t<T} & = 
 \int_{\xmin}^{X_t} \frac{1}{h(z)} \,\mathrm{d}z \\
 & \quad - \Ebig{\int_{\xmin}^{X_{t+1}} \frac{1}{h(z)} \,\mathrm{d}z\mid \filtt}.
\end{align*}
By Jensen's inequality and the concavity of~$g$, we have 
\[
\E{g(X_t)-g(X_{t+1})\mid \filtt;t<T} \ge  
 \int_{\xmin}^{X_t} \frac{1}{h(z)} \,\mathrm{d}z 
 - \int_{\xmin}^{\E{X_{t+1}\mid \filtt}} \frac{1}{h(z)} \,\mathrm{d}z,
\]
which, since $\E{X_{t+1}\mid \filtt;t<T} \le X_t-h(X_t)$, 
is at least 
\[
  \int_{X_t-h(X_t)}^{X_t} \frac{1}{h(z)} \,\mathrm{d}z \ge 
	 \int_{X_t-h(X_t)}^{X_t} \frac{1}{h(X_t)} \,\mathrm{d}z = 1, 
\]
where the inequality used that $h(z)$ in non-decreasing.

We proceed by estimating $\E{g(X_t)}$ in an inductive fashion. 
By the law of total probability,
\[
\E{g(X_1)\mid \filtzero} = g(X_0) -\Prob(0<T)\left(g(X_0)-\E{g(X_1)\mid \filtzero;0<T}\right)
\]
so with \eqref{eq:driftone},
\[
\E{g(X_1)\mid \filtzero} \le g(X_0) -\Prob(T>0)  . 
\]
 Noting that
\begin{align*}
& \E{g(X_t)\mid \filtzero}  \\ 
& \quad = \E{ g(X_{t-1}) - 
\E{g(X_{t-1})-g(X_t)\mid \filttminone;t-1<T} \mid \filtzero},
\end{align*}
we get by the induction hypothesis and \eqref{eq:driftone}  that 
\begin{align*}
 \E{g(X_t)\mid \filtzero} & \le \E{ g(X_{t-1})\mid \filtzero} - 
\Prob(t-1<T)  \\
 &  \le g(X_0) - \sum_{s=0}^{t-2}\Prob(s<T) - \Prob(t-1<T)  
\end{align*}
altogether
\[
\E{g(X_t)\mid \filtzero} \le g(X_0) - 
\sum_{s=0}^{t-1} \Prob(s<T).  
\]
as suggested.
\end{proof}

\subsection{Additive Drift as Special Case}
\label{sec:timo}

A special case of variable drift is additive drift, when the drift function $h$ is constant.

\begin{theorem}
\label{theo:additive-fixed-budget-expect}
Let $X_t$, $t\ge 0$, be a stochastic process, adapted to a filtration $\filtt$, on 
$S\coloneqq \R^{\ge 0}$. Let $T\coloneqq \min\{t\ge 0\mid X_t=0\}$ and 
$\delta \in \R^{> 0}$ be such that $\E{X_t-X_{t+1} \mid \filtt; t<T} \ge \delta$. 
Then we have
\[
\E{ X_t \mid \filtzero} \le X_0 - 
\delta \sum_{s=0}^{t-1} \Prob(s<T).  
\]
\end{theorem}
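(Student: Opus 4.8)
The plan is to derive Theorem~\ref{theo:additive-fixed-budget-expect} as a direct corollary of the variable-drift version, Theorem~\ref{theo:variable-fixed-budget-expect}, by taking the drift function to be the constant $h \equiv \delta$. First I would check that the constant function $h(x) = \delta$ is non-decreasing (trivially) and positive (since $\delta > 0$), so the hypotheses of Theorem~\ref{theo:variable-fixed-budget-expect} are met with this choice of $h$ on the state space $S = \{0\} \cup \R^{\ge \xmin}$ for any choice of $\xmin > 0$ — here there is a minor mismatch in state spaces, since the additive theorem is stated on all of $\R^{\ge 0}$ rather than $\{0\}\cup\R^{\ge\xmin}$, so I would either note that one may take $\xmin$ arbitrarily small (and that the process only ever needs to be compared against $g$ at points it actually visits), or simply re-run the short inductive argument of Theorem~\ref{theo:variable-fixed-budget-expect} directly with $g = \mathrm{id}$, which is the cleaner route.

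Concretely, the key step is to observe that with $h \equiv \delta$ the potential function becomes
\[
g(x) = \frac{\xmin}{\delta} + \int_{\xmin}^x \frac{1}{\delta}\,\mathrm{d}z = \frac{x}{\delta}
\]
for $x \ge \xmin$, and $g(0) = 0$; so $g$ is just $x \mapsto x/\delta$ (a linear, hence concave, function, consistently with $0/\delta = 0$). Plugging this into the conclusion of Theorem~\ref{theo:variable-fixed-budget-expect} gives
\[
\E{\tfrac{1}{\delta} X_t \mid \filtzero} \le \tfrac{1}{\delta} X_0 - \sum_{s=0}^{t-1}\Prob(s<T),
\]
and multiplying through by $\delta > 0$ yields exactly the claimed bound $\E{X_t \mid \filtzero} \le X_0 - \delta\sum_{s=0}^{t-1}\Prob(s<T)$.

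The only genuine obstacle is the state-space discrepancy just mentioned: Theorem~\ref{theo:variable-fixed-budget-expect} formally requires a gap $(0,\xmin)$ in the support, whereas additive drift is naturally stated on $\R^{\ge 0}$. I expect the resolution to be routine — either remark that the proof of Theorem~\ref{theo:variable-fixed-budget-expect} never uses the gap in an essential way when $g$ is linear (the key inequality \eqref{eq:driftone} reads $\E{(X_t - X_{t+1})/\delta \mid \filtt; t<T} \ge 1$, which is immediate from $\E{X_t-X_{t+1}\mid\filtt;t<T}\ge\delta$), or simply reproduce the two-line induction from that proof with $g(x) = x/\delta$ substituted throughout. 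I would take the latter, self-contained route to keep the statement honest, since it costs only a few lines.
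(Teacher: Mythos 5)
Your proposal is correct and matches the paper's approach: the paper likewise obtains this theorem as a corollary of Theorem~\ref{theo:variable-fixed-budget-expect} with the constant drift function $h\equiv\delta$, for which $g(x)=x/\delta$, and then multiplies through by $\delta$. Concerning the state-space discrepancy you flag, the paper resolves it more directly than "take $\xmin$ arbitrarily small": it chooses $\xmin=\delta$ and observes that the drift condition together with $X_{t+1}\ge 0$ forces $X_t\ge\delta$ whenever $t<T$, so the process automatically lives on $\{0\}\cup\R^{\ge\delta}$ and no re-derivation of the induction is needed (though your self-contained fallback is of course also valid).
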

The theorem is a corollary to Theorem~\ref{theo:variable-fixed-budget-expect} by using $\xmin = \delta$, the smallest value for which the condition of a drift of at least $\delta$ can still be obtained, and thus the smallest value (other than $0$) that the process can attain.

As a sample application, we can now derive an estimate of the best value found by the \ea on \LeadingOnes within $t$ steps, using the concentration result from \cite{DJWZGECCO13} given in Theorem~\ref{theo:djwz}.

\begin{theorem}
\label{theo:lo-direct-lower}
Let $V_t$ be the number of  leading $1$s which the \ea on \LeadingOnes has found after $t$ iterations of the algorithm. Then, for all $t \leq \frac{e-1}{2}n^2 - n^{3/2}\log(n)$, we have
$$
\E{V_t} \geq \frac{2t}{en} - O(1).
$$
\end{theorem}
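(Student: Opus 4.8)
The plan is to apply the additive fixed-budget drift theorem (Theorem~\ref{theo:additive-fixed-budget-expect}) to the fitness distance $X_t = n - \LeadingOnes(x_t)$ of the \ea on \LeadingOnes, and then convert the resulting bound on $\E{X_t}$ into a lower bound on $\E{V_t} = n - \E{X_t}$. First I would recall from the first item of Lemma~\ref{lem:summary-lo} that the fitness drift is $\E{X_t - X_{t+1}\mid X_t} = (2 - 2^{1-X_t})(1-1/n)^{n-X_t}/n$ conditional on $t<T$. This is minimized over the relevant range when $(1-1/n)^{n-X_t}$ is smallest, i.e.\ when $X_t$ is largest (close to $n$), giving a worst-case drift of roughly $(1-1/n)^n \cdot 2/n \ge 1/(en) \cdot 2 = 2/(en) - o(1/n)$; more carefully, the factor $(2 - 2^{1-X_t})$ is at least $1$ already for $X_t \ge 1$, but to get the constant $2/(en)$ one needs $X_t$ not too small — this is handled by the same artificial-change-of-fitness trick used in Theorem~\ref{thm:leadingOnesWithDirectDrift}, or by simply noting the contribution of the $2^{1-X_t}$ term is negligible. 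So I would set $\delta = 2/(en)(1-o(1))$ as a uniform lower bound on the drift before $T$.

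Next I would invoke Theorem~\ref{theo:additive-fixed-budget-expect} with this $\delta$, obtaining
\[
\E{X_t \mid \filtzero} \le X_0 - \delta \sum_{s=0}^{t-1}\Prob(s<T).
\]
Taking expectations over $X_0$ (which is $\le n$ trivially, or $n/2 \pm o(n)$ in expectation for the random initial string, though $X_0 \le n$ suffices here) and writing $\E{V_t} = n - \E{X_t} \ge n - X_0 + \delta\sum_{s=0}^{t-1}\Prob(s<T)$, the key quantity to control is $\sum_{s=0}^{t-1}\Prob(s<T) = \sum_{s=0}^{t-1}\Prob(T>s)$. I would lower-bound each term $\Prob(T>s)$ for $s \le t-1 \le \frac{e-1}{2}n^2 - n^{3/2}\log n$: since the expected optimization time is $\frac{e-1}{2}n^2 \pm O(n)$ (item 5 of Lemma~\ref{lem:summary-lo}), the deviation $d = \E{T} - s$ is at least $\Omega(n^{3/2}\log n)$, so by the concentration result Theorem~\ref{theo:djwz}, $\Prob(T \le s) \le 4\exp(-d^2/(20e^2 n^3)) = 4\exp(-\Omega(\log^2 n)) = o(1/n^2)$, hence $\Prob(T>s) \ge 1 - o(1/n^2)$. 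Therefore $\sum_{s=0}^{t-1}\Prob(s<T) \ge t(1 - o(1/n^2)) = t - o(1)$ since $t = O(n^2)$.

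Putting this together, $\E{V_t} \ge \delta(t - o(1)) = \frac{2}{en}(1-o(1))\cdot t - o(1)$. To land exactly on the stated bound $\frac{2t}{en} - O(1)$ I would need the $(1-o(1))$ factor multiplying $t$ to cost only $O(1)$ additively, i.e.\ $o(1) \cdot t/n = O(1)$; since $t = O(n^2)$ this requires the error in $\delta$ to be $O(1/n^2) \cdot (2/(en))$, which is exactly what the $(1-1/n)^n = e^{-1}(1 - O(1/n))$ estimate and the $2^{1-X_t}$ correction deliver once one is careful. The main obstacle is precisely this bookkeeping: ensuring that the multiplicative $(1-o(1))$ losses — from the $(1-1/n)^n$ versus $1/e$ comparison, from the $2^{1-X_t}$ term, and from the concentration slack — all collapse into a single additive $O(1)$ rather than an $o(t/n)$ term, which is why the budget is capped at $\frac{e-1}{2}n^2 - n^{3/2}\log n$ (the extra $\log n$ over the $n^{3/2}$ in Theorem~\ref{thm:leadingOnesWithDirectDrift} buys the stronger $o(1/n^2)$ tail bound needed here). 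The rest is routine.
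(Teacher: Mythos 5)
Your proposal matches the paper's proof: both apply Theorem~\ref{theo:additive-fixed-budget-expect} to the artificially modified potential (all-ones string assigned value $n+1$), which makes the conditional gain at least $2$ and hence the drift at least $(1-1/n)^{n-1}\cdot 2/n \ge 2/(en)$, and both use the concentration bound of Theorem~\ref{theo:djwz} to bound $\sum_{s=0}^{t-1}\Prob(T>s)$ from below by $t-o(1)$. Two minor remarks: the factor $(1-1/n)^{n-X_t}$ is smallest when $X_t$ is \emph{smallest} (i.e.\ near the optimum), not largest, and once the $n+1$-trick is in place the bound $\delta = 2/(en)$ holds exactly for every non-optimal state, so the $(1-o(1))$ bookkeeping you worry about in your last paragraph is unnecessary.
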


\begin{proof} We drift on the potential which assigns each bit string its number of leading ones, except for the all-$1$ string which has a potential of $n+1$. A quick computation shows that this leads to an expeced increase in potential of $2$, conditional on the potential increasing at all (without the ``$+1$'' for the all-$1$-string, it would have been slightly less than $2$). We use drift on this potential, which is, for all current potential values $x < n$, now lower bounded by 
$$
\left(1-\frac{1}{n}\right)^x \frac{2}{n} \geq \frac{2}{en}.
$$
Thus, the result follows with Theorem~\ref{theo:additive-fixed-budget-expect} and the concentration bound given in Theorem~\ref{theo:djwz}.
\end{proof}

Note that the result was proven very easily with a direct application of the additive version of the fixed-budget drift theorem in combination with a strong result on concentration. The price paid for this simplicity is that the lead constant in this time bound is not tight, as can be seen by comparing with the results given in Theorem~\ref{thm:leadingOnesWithDirectDrift}.

\section{Variable Drift and Concentration Inequalities}
\label{sec:upper-tail-g}
The expected $g(X_t)$-value derived in Theorem~\ref{theo:variable-fixed-budget-expect} is 
not very useful unless it allows us to make conclusions  on the underlying $X_t$-value. The previous 
application in Section~\ref{sec:timo} only gives tight bounds in case that the drift 
is more or less constant throughout the search space. This is not the case for \om and \lo 
where the drift increases with the distance to the optimum (\eg, for \om the drift is $\Theta(1/n)$ at distance~$1$ and $\Theta(1)$ as distance~$n/2$; for 
\lo the drift can vary by a term of roughly~$e$). Hence, looking back into 
Theorem~\ref{theo:variable-fixed-budget-expect}, we now are interested in characterizing 
$g(X_t)$ more precisely than just in terms of expected value. 
If we manage to establish 
concentration of $g(X_t)$ then we can (after inverting~$g$) derive a maximum of the $X_t$-value that 
holds with sufficient probability. Our main result achieved along this path is the following one.

\begin{theorem}
\label{theo:bounds-fixed-budget-final}
Let $V_t$ be the number of  leading $1$s which the \ea on \LeadingOnes has found after $t$ iterations.
Then for $t=\omega(n\log n)$ and $t\le (e-1)n^2/2 - cn^{3/2}\sqrt{\log n}$, where $c$  is a sufficiently large constant 
the following statements hold. (a) With probability at least $1-1/n^3$,
\begin{align*}
-n\ln\!\left(1-2t/n^2+O(\sqrt{t\log n}/n^{3/2})\right) \;&\le V_t\\ 
-n \ln\! \left(1-2t/n^2-O(\sqrt{t\log n}/n^{3/2})\right) \;&\geq V_t.
\end{align*}
(b) $\E{V_t} = -n\ln(1-2t/n^2+O(\sqrt{t\log n}/n^{3/2}))$.
\end{theorem}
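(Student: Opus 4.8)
The plan is to track the $g$-value of the fitness distance $X_t = n - V_t$ of the \ea on \lo, where $g$ is the potential function from Theorem~\ref{theo:variable-fixed-budget-expect}, and to establish that $g(X_t)$ is sharply concentrated around its expectation. From Lemma~\ref{lem:summary-lo}(a) the drift is $h(x) = (2-2^{1-x})(1-1/n)^{n-x}/n$, and for the application I would instead use the simpler lower bound $h(x) = (1-1/n)^{n-x}\cdot 2/n$ on $x\in\{1,\dots,n\}$ (again artificially raising the fitness of the all-ones string, costing only an $O(1)$ term as in Theorem~\ref{thm:leadingOnesWithDirectDrift}), so that $g(x) = \Theta(n) + \int_{1}^{x} (1-1/n)^{z-n}\tfrac n2\,\mathrm dz$ has a clean closed form; up to lower-order terms $g(x) \approx \tfrac{n(n-1)}{2}\big((1-1/n)^{x-n}-(1-1/n)^{1-n}\big)$, equivalently $g$ is (asymptotically) $x \mapsto \tfrac{n^2}{2}(e^{-(n-x)/n}$-type$)$, whose inverse behaves like $x \mapsto n\ln(\cdot)$ — this is the source of the $-n\ln(1-2t/n^2\pm\cdots)$ shape of the final bound.

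The first main step is the expectation bound: by Theorem~\ref{theo:variable-fixed-budget-expect}, $\E{g(X_t)\mid\filtzero} \le g(X_0) - \sum_{s=0}^{t-1}\Prob(s<T)$. Since $g(X_0) \le g(n) = O(1)$ relative to the scale (indeed $g(n) - g(0)$ is $\Theta(n^2)$ but the process starts near $X_0 = \Theta(n)$ with $g(X_0)=\Theta(n^2)$ as well — here one uses that $X_0=n-\lo(x_0)$ is concentrated, or simply $X_0\le n$), and $\sum_{s=0}^{t-1}\Prob(s<T) = t - \sum_{s=0}^{t-1}\Prob(s\ge T) \ge t - O(1)$ whenever $t \le (e-1)n^2/2 - cn^{3/2}\sqrt{\log n}$ by the concentration result Theorem~\ref{theo:djwz} (choosing the deviation $d = \Theta(n^{3/2}\sqrt{\log n})$ makes $\Prob(T\le t)$ polynomially small, so the tail sum is $o(1)$). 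Thus $\E{g(X_t)} \le g(X_0) - t + O(1)$, and symmetrically one wants a matching lower bound on $\E{g(X_t)}$, which requires the reverse drift inequality $\E{g(X_t)-g(X_{t+1})\mid \filtt} \le 1 + (\text{error})$; this direction needs the matching upper bound $h(x)(1+o(1))$ on the drift together with control of the Jensen gap, exploiting that $1/h$ is slowly varying over an interval of length $h(x)=O(1/n)$ near $X_t$, contributing only a per-step error of $o(1/n)$ whose sum over $t=O(n^2)$ steps is $O(n) = o(\E{g(X_t)}/n\cdot n)$ — more precisely an $O(\sqrt{t\log n}/n^{1/2})$-type correction after taking logarithms.

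The second and harder step is concentration of $g(X_t)$ around $\E{g(X_t)}$. The natural object is the martingale-like sequence $M_t = g(X_t) + (\text{compensator})$, whose one-step differences $g(X_t)-g(X_{t+1})$ are governed by $G_t = X_t - X_{t+1}$; by Lemma~\ref{lem:summary-lo}(c),(d) the jumps $G_t$ have geometric-type tails, but crucially $g$ has derivative $\Theta(n)$ near $X_t=\Theta(n)$, so a single jump changes $g$ by $\Theta(n)$ — the differences are \emph{not} bounded independently of $n$, which is exactly "the weakness of existing martingale difference theorems" the introduction flags. The plan is therefore to invoke the paper's promised generalized martingale-difference tail bound (the one advertised as "of independent interest" for Section~\ref{sec:upper-tail-g}): one bounds the conditional moment-generating function $\E{e^{\eta(g(X_t)-g(X_{t+1}))}\mid\filtt}$ using the explicit formula for $\E{e^{\eta G_t}\mid X_t}$ from Lemma~\ref{lem:summary-lo}(d) composed with the local linearization of $g$, getting a bound like $\exp(\eta + O(\eta^2 v_t))$ with per-step "variance proxy" $v_t = \Theta(n^2)\cdot\Prob(X_{t+1}\ne X_t) = \Theta(n)$ since improvements happen with probability $\Theta(1/n)$; summing, the total variance proxy is $\Theta(tn)$, and a Bernstein/Freedman-type argument yields deviations of order $\sqrt{tn\log n} = O(\sqrt{t\log n}\cdot n^{1/2})$ with probability $\ge 1-n^{-3}$. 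Dividing through by the $\Theta(n)$ scale of $g'$ and inverting $g$ (which is monotone, with $g^{-1}(y) = n - n\ln(1 - 2y/n^2 + \cdots) + O(1)$ after the closed-form computation) turns a deviation of $O(\sqrt{t\log n}\cdot n^{1/2})$ in $g(X_t)$ into the stated additive $O(\sqrt{t\log n}/n^{3/2})$ inside the logarithm, giving part~(a); part~(b) follows by integrating the tail bound, the polynomially small failure probability contributing only a negligible $O(n^{-1})$ to the expectation.

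The main obstacle, as anticipated above, is getting the concentration of $g(X_t)$ right despite the unbounded (in $n$) martingale differences: one must control the MGF of $g(X_t)-g(X_{t+1})$ uniformly enough that the Jensen/linearization error in the drift and the variance proxy are both genuinely $o$ of the main term after inverting $g$. A secondary technical nuisance is the two-sided nature of the claim — the lower bound on $V_t$ needs an \emph{upper} tail on $g(X_t)$ and hence a lower bound on $\E{g(X_t)}$, which forces the matching upper bound $(1+o(1))h(x)$ on the \lo-drift and a careful accounting of where the $(1-2^{1-x})$ factor (and the $+1$ fitness surgery) are absorbed into the $O(\sqrt{t\log n}/n^{3/2})$ term rather than spoiling the lead constant $n\ln(1-2t/n^2)$.
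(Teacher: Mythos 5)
Your plan is correct and follows essentially the paper's own route: the variable-drift potential $g$ from Theorem~\ref{theo:variable-fixed-budget-expect}, the super-/submartingale $g(X_t)+\sum_{s<t}\Prob(T>s)$ (plus an $O(1/n)$ per-step Jensen-gap correction for the lower-tail direction), conditional mgf bounds of the form $e^{O(\lambda^2 n)}$ for restricted $\lambda$ derived from Lemma~\ref{lem:summary-lo}(d), the generalized martingale-difference bound giving deviations $O(\sqrt{tn\log n})$ with probability $1-n^{-3}$, Theorem~\ref{theo:djwz} to control $\Prob(T\le t)$, and inversion of $g$ to obtain the $-n\ln(1-2t/n^2\pm O(\sqrt{t\log n}/n^{3/2}))$ form and then the expectation. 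The only deviations are cosmetic: you work with the surgically simplified drift $2(1-1/n)^{n-x}/n$ rather than the exact one (the paper instead keeps the exact $h$ and restricts to $X_t>\log n$), and you attribute the need for the generalized martingale theorem to unbounded differences, whereas the actual issue is that the mgf inequality holds only on a restricted $\lambda$-interval (the $r(t,n)$ term forces $\lambda\ge 1/n^2$ on the lower-tail side); neither affects the validity of the argument.
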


To compare with previous work, we note that
 the  additive error is $O(\sqrt{t\log n}/n^{1/2})$. This is asymptotically smaller than the additive error term 
of order $\Omega(n^{3/2+\epsilon})$ that appears in the fixed-budget statements of \cite{DJWZGECCO13} and 
moreover, it depends on~$t$. Also, we think 
that the formulation of our statement is 
 less complex than in that paper.

The proof of Theorem~\ref{theo:bounds-fixed-budget-final} overcomes several technical challenges. The first 
idea is to apply established concentration inequalities for stochastic processes.  
Since (after a reformulation discussed below)
 the process of $g$\nobreakdash-values describes 
a (super)martingale, it is natural to take the method of bounded martingale differences. However, since there 
is no ready-to-use theorem for all our specific martingales, we present a generalization of martingale concentration 
inequalities in the following subsection Section~\ref{sec:theo-mar-differences}. The concrete 
application
is then given in Sections~\ref{sec:applic-concentration} onwards.

\subsection{Tail Bounds for Martingale Differences}
\label{sec:theo-mar-differences}
The classical method of bounded martingale differences \cite{McDiarmid1998} considers a (super)martingale $Y_t$, $t\ge 0$, 
and its corresponding martingale differences $D_t=Y_{t+1}-Y_{t}$. Given certain boundedness conditions 
for $D_t$ (\eg, that $\card{D_t}\le c$ for a constant~$c$ almost surely), it is shown that the sum 
of martingale differences 
$\sum_{i=0}^{t-1} D_i = Y_t - Y_0$ does not deviate much from its expectation $Y_0$ (resp.\ is not much 
bigger in the case of supermartingales). This statement remains essentially true if $D_t$ is allowed to have unbounded support but 
exhibits a strong concentration around its expected value. Usually, this concentration is formulated in terms 
of a so-called \emph{subgaussian} 
(or, similarly, \emph{subexponential}) property \cite{KoetzingTailBoundAlgorithmica, FanExponentialMartingales2015}. 
Roughly speaking, this property requires that 
the moment-generating function (mgf.) of the differences can be bounded as 
$\E{e^{\lambda D_t}\mid \filtt} \le e^{\lambda^2 \nu_t^2/2}$ for a certain parameter $\nu_t$ and all $\lambda<1/b_t$, 
where $b_t$ is another parameter. In particular, the bound has to remain true when $\lambda$ becomes arbitrarily small.

In one of our concrete applications of the martingale difference technique, 
the inequality $\E{e^{\lambda D_t}\mid \filtt} \le e^{\lambda^2 \nu_t^2/2}$ is true for certain 
values of $\lambda$ below a threshold $1/b^*$, but does not hold if $\lambda$ is much smaller than $1/b^*$. We therefore 
show that 
the concentration of the sums of martingale differences to some extent remains
 true if the inequality only holds for $\lambda\in[1/a^*,1/b^*]$ where $a^*>b^*$ is another parameter. The approach 
uses well-known arguments for the proof of concentration inequalities. Here, we were inspired by the notes 
\cite{WainwrightNotes2015}, which require the classical subexponential property, though.

\begin{theorem}
\label{theo:martingale-diff-subexp-weak}
Let $Y_t$, $t\ge 0$, be a supermartingale, adapted to a filtration $\filtt$, and 
let $D_t=Y_{t+1}-Y_{t}$ be the corresponding martingale differences. Assume that there 
are $0<b_2<b_1 \le \infty$ and a sequence $\nu_t$, $t\ge 0$, such that 
for $\lambda\in [1/b_1,1/b_2]$ it holds 
that $\E{e^{\lambda D_t}\mid \filtt} \le e^{\lambda^2 \nu_t^2/2}$. Then  for all $t\ge 0$ it holds that 
\[  
\Prob(Y_t - Y_0 \ge d) \le 
\begin{cases}
e^{-d/(2b_2)} & \text{ if $d\ge \frac{\sum_{i=0}^{t-1} \nu_i^2}{b_2}$} \\
e^{-d^2/(2\sum_{i=0}^{t-1} \nu_i^2)} & \text{ if $\frac{\sum_{i=0}^{t-1} \nu_i^2}{b_1} \le d < \frac{\sum_{i=0}^{t-1} \nu_i^2}{b_2}$}
\end{cases}
\]
The theorem holds analogously for submartingales with respect to the tail bound
$\Prob(Y_t - Y_0 \le -d)$. 
\end{theorem}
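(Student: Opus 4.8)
The plan is to mimic the standard Chernoff–Hoeffding argument for supermartingales, but restricting the range of the free parameter $\lambda$ to the interval $[1/b_1,1/b_2]$ on which the subgaussian-type bound on the mgf.\ is available. First I would set $Y_t - Y_0 = \sum_{i=0}^{t-1} D_i$ and, for $\lambda \in [1/b_1,1/b_2]$, bound the mgf.\ of the sum: conditioning successively on $\filtt[t-1], \filtt[t-2], \dots$ and using the supermartingale/tower property together with the hypothesis $\E{e^{\lambda D_i}\mid \filtt[i]} \le e^{\lambda^2\nu_i^2/2}$, one gets $\E{e^{\lambda (Y_t - Y_0)}} \le \exp\bigl(\tfrac{\lambda^2}{2}\sum_{i=0}^{t-1}\nu_i^2\bigr)$. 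Write $\nu := \sum_{i=0}^{t-1}\nu_i^2$ for brevity. By Markov's inequality, for any admissible $\lambda$,
\[
\Prob(Y_t - Y_0 \ge d) \le \exp\!\left(\tfrac{\lambda^2 \nu}{2} - \lambda d\right).
\]

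Next I would optimize the exponent $\varphi(\lambda) = \lambda^2\nu/2 - \lambda d$ over $\lambda \in [1/b_1,1/b_2]$. The unconstrained minimizer is $\lambda^* = d/\nu$, which lies in the feasible interval precisely when $\nu/b_1 \le d \le \nu/b_2$; in that case substituting $\lambda = d/\nu$ yields the bound $e^{-d^2/(2\nu)}$, which is the second case of the claim. When $d \ge \nu/b_2$, the minimizer $d/\nu$ exceeds $1/b_2$, so by convexity of $\varphi$ the best feasible choice is the boundary value $\lambda = 1/b_2$; plugging this in gives exponent $\nu/(2b_2^2) - d/b_2$, and using $\nu/b_2 \le d$ (hence $\nu/(2b_2^2) \le d/(2b_2)$) this is at most $-d/(2b_2)$, giving the first case. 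The submartingale statement follows by applying the supermartingale result to $-Y_t$, whose differences are $-D_t$ and satisfy the same mgf.\ bound on the reflected parameter interval.

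I do not expect a genuine obstacle here — the argument is a careful but routine adaptation of the exponential-moment method. The one point that needs slight care is checking that the hypothesized inequality is only invoked for $\lambda$ in the allowed range $[1/b_1,1/b_2]$: in the chaining step that builds up $\E{e^{\lambda(Y_t-Y_0)}}$ the \emph{same} $\lambda$ is used at every index $i$, so as long as the final evaluation point ($d/\nu$ or $1/b_2$) lies in $[1/b_1,1/b_2]$, every application is legitimate. A second minor point is the edge case $b_1 = \infty$, where the lower endpoint of the interval is $0$ and the second regime simply covers all $d \le \nu/b_2$; the convexity argument is unchanged. Finally, one should note the bound is vacuous (exceeds $1$) outside the stated ranges of $d$, which is why the two cases are phrased with those thresholds.
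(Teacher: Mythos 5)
Your proposal is correct and follows essentially the same route as the paper: the standard Chernoff/exponential-moment argument with the conditional mgf bound iterated via the tower property, followed by choosing $\lambda = d/\bigl(\sum_{i=0}^{t-1}\nu_i^2\bigr)$ in the intermediate regime and the boundary value $\lambda = 1/b_2$ (using $\sum_{i=0}^{t-1}\nu_i^2 \le d\,b_2$) in the large-deviation regime. Your added remarks on convexity, the $b_1=\infty$ edge case, and the reflection argument for submartingales are consistent with the paper's (terser) treatment.
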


\begin{proof} We consider the mgf.\ of the sum $S_t\coloneqq \sum_{i=0}^{t-1} D_i$. Using the usual Chernoff-type approach, 
we have for all $\lambda\ge 0$ that 
\begin{align*}
\Prob(S_t \ge d) & = \Prob( e^{\lambda S_t} \ge e^{\lambda d} ) 
 \le e^{-\lambda d} \E{e^{\lambda S_t} \mid \filtzero}
\end{align*}
To bound the last mgf., we note that by the law of total expectation, 
\begin{align*}
\E{e^{\lambda S_t} \mid \filtzero} & = \Ebig{e^{\lambda \sum_{i=0}^{t-2} D_i} \cdot \E{e^{\lambda D_{t-1}} \mid \filttminone} \mid \filtzero  } 
\\ 
& \le \E{e^{\lambda S_{t-1}}\mid \filtzero} e^{\lambda^2 \nu_{t-1}^2/2},
\end{align*}
where the last inequality used the assumption from the theorem, which is valid 
after we assume $1/b_1 \le \lambda \le 1/b_2$. Iterating this argument, we obtain
\[
\E{e^{\lambda S_t} \mid \filtzero} \le e^{\lambda^2 \sum_{i=0}^{t-1} \nu_i^2/2},
\]
hence 
\[
\Prob(S_t \ge d) \le e^{-\lambda d} e^{\lambda^2 \sum_{i=0}^{t-1} \nu_i^2/2}.
\]
In the following, we write $V_t\coloneqq \sum_{i=0}^{t-1} \nu_i^2$.  
We now distinguish between the two cases for $d$ displayed in the lemma. If 
$d\ge V_t/b_2$, 
this leads to 
\[
\Prob(S_t \ge d) \le e^{-\lambda d} e^{\lambda^2 d b_2/2},
\]
which, choosing $\lambda\coloneqq 1/b_2$, yields
\[
\Prob(S_t \ge d) \le e^{-d/b_2 + (1/b_2)^2 d b_2/2} = e^{-d/(2b_2)}.
\]
If $V_t/b_1 \le d < V_t/b_2$ we choose 
$\lambda\coloneqq d/V_t \in [1/b_1,1/b_2)$. Then
\[
\Prob(S_t \ge d) \le  e^{-\lambda d + \lambda^2 V_t/2} = e^{-\frac{d}{V_t} d + \frac{d^2}{V_t^2}\frac{V_t}{2}} = e^{-d^2/(2V_t)},
\]
which, after substituting $V_t$, proves the theorem.
\end{proof}

\subsection{Preparing an Upper Tail Bound via the  Martingale Difference Method}
\label{sec:applic-concentration} 

We now return to Theorem~\ref{theo:variable-fixed-budget-expect} and would like to show concentration 
of $g(X_t)$ in order to show a bound for $X_t$ that holds with sufficiently high probability. 
Note that by the statement of the theorem, we immediately have that  
$Y_t\coloneqq g(X_t)+\sum_{s=0}^{t-1}\Prob(T>s)$ is a supermartingale. By bounding the probability 
of $Y_t\ge d$ for arbitrary $t\ge 0$ and $d\ge 0$, \ie, establishing concentration of the supermartingale $Y_t$ 
via  Theorem~\ref{theo:martingale-diff-subexp-weak},
 and inverting~$g$, we will obtain a bound 
on the probability of the event $g(X_t) \ge \E{g(X_t)}$. 

As we want to prove Theorem~\ref{theo:bounds-fixed-budget-final}, the application is again the \oneoneea on the \LeadingOnes function, 
so $X_t=n-\LeadingOnes(x_t)$ is the fitness distance of the \LeadingOnes-value at time~$t$ from the target.

Defining $h(X_t) \coloneqq \E{X_{t}-X_{t+1}\mid X_{t}}$ according to Lemma~\ref{lem:summary-lo} 
and $g(X_t)=1/h(1) + \int_{1}^{X_t} 1/h(z)\,\mathrm{d}z$ 
according to Lemma~\ref{theo:variable-fixed-budget-expect}, we will establish the following bound 
on the moment-generating function (mgf.) of the drift of our concrete $g$.

\begin{lemma}
\label{lem:mgf-uppertail}
Let $T$ denote the optimization time of the \oneoneea on \LeadingOnes. If  $\lambda\le 1/(2en)$ then 
 $\E{e^{\lambda(g(X_{t+1})-g(X_t)+\Prob(T>t))}\mid X_t} = e^{O(\lambda^2 n)}$.
\end{lemma}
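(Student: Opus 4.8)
The plan is to control the moment-generating function of the $g$-drift $D_t \coloneqq g(X_{t+1})-g(X_t)+\Prob(T>t)$ conditional on $X_t$ by decomposing it according to whether the \LeadingOnes-value increases in step $t$. With probability $1-p$, where $p = \Prob(X_{t+1}\neq X_t\mid X_t) = (1-1/n)^{n-X_t}/n$ by item (b) of Lemma~\ref{lem:summary-lo}, nothing changes and $D_t = \Prob(T>t)\in[0,1]$ — a bounded contribution. With probability $p$ we lose $j\ge 1$ leading ones, where $j$ has the truncated geometric law from item (d) of Lemma~\ref{lem:summary-lo}, and then $g(X_{t+1})-g(X_t) = -\int_{X_t-j}^{X_t} 1/h(z)\,\mathrm dz$. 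The first step is therefore to bound this integral: since $h(z) = (2-2^{1-z})(1-1/n)^{n-z}/n \ge c/n$ for an absolute constant $c$ whenever $z\ge 1$, each unit step of the integral contributes $O(n)$, so $|g(X_{t+1})-g(X_t)| \le j\cdot O(n)$ when a jump of size $j$ occurs. Thus $D_t$ is, up to an additive $O(1)$, a random variable that is $0$ with probability $1-p$ and is bounded in absolute value by $O(nG_t)$ with probability $p$, where $G_t\ge 1$ has the geometric-type tail.

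The second step is the mgf.\ estimate itself. Writing $D_t = -\mathds{1}[\text{jump}]\cdot\Delta_t + \Prob(T>t)$ with $\Delta_t\ge 0$, we get
\[
\E{e^{\lambda D_t}\mid X_t} \le e^{\lambda} \left( (1-p) + p\,\E{e^{-\lambda\Delta_t}\mid X_t,\text{jump}} \right),
\]
and since we only need an upper bound and $-\lambda\Delta_t$ could be negative or we may want the other tail, the cleanest route is to bound $\E{e^{\lambda|\Delta_t|}\mid X_t,\text{jump}}$ using $|\Delta_t| = O(n)\cdot G_t$ and the moment-generating function of $G_t$ from item (d) of Lemma~\ref{lem:summary-lo}, which is finite and equal to $1+O(e^\eta-1)$ for $\eta$ bounded away from $\ln 2$. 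Choosing $\eta = \lambda\cdot O(n) \le O(1)$ — which is exactly why the hypothesis $\lambda\le 1/(2en)$ is imposed, keeping $\eta$ a small constant strictly below $\ln 2$ — gives $\E{e^{\lambda|\Delta_t|}\mid X_t,\text{jump}} = 1 + O(\lambda n)$. Multiplying by $p = O(1/n)$ yields $p\,\E{e^{\lambda|\Delta_t|}\mid\cdots} = O(\lambda)$, hence
\[
\E{e^{\lambda D_t}\mid X_t} \le e^{\lambda}\left(1 - p + p + O(\lambda)\right) = e^{O(\lambda)} = 1 + O(\lambda).
\]
To upgrade $1+O(\lambda)$ to $e^{O(\lambda^2 n)}$ one checks that the first-order term in $\lambda$ actually cancels: $\E{D_t\mid X_t} = \Prob(T>t) - p\,\E{\Delta_t\mid X_t,\text{jump}}$, and by the variable-drift identity underlying Theorem~\ref{theo:variable-fixed-budget-expect} (the drift of $g$ is at least $1$, cf.\ \eqref{eq:driftone}) this expectation is $O(1)$ in a way that makes $\E{e^{\lambda D_t}\mid X_t} = 1 + \lambda\,\E{D_t\mid X_t} + O(\lambda^2\,\E{D_t^2\mid X_t})$ with $\E{D_t^2\mid X_t} = p\cdot O(n^2)\cdot\E{G_t^2} = O(n)$, so that $\E{e^{\lambda D_t}\mid X_t} = 1 + O(\lambda) + O(\lambda^2 n) = e^{O(\lambda^2 n)}$ once one absorbs the $O(\lambda)$ term — here one must be slightly careful that the $O(\lambda)$ contribution is in fact of the right sign or is itself $O(\lambda^2 n)$ under $\lambda = O(1/n)$, i.e.\ $\lambda = O(\lambda^2 n)$ fails, so the genuine cancellation of the linear term via $\E{D_t\mid X_t}$ being small (of order $O(1)$, times $\lambda$, which is $O(\lambda)$, still not enough) must be handled by grouping $e^{\lambda\Prob(T>t)}$ exactly and expanding only the jump part.

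The main obstacle, and the place requiring the most care, is precisely this last point: getting the bound down to $e^{O(\lambda^2 n)}$ rather than the easy $e^{O(\lambda)}$. The resolution I expect to use is to not expand the deterministic shift $\Prob(T>t)$ at all but keep it as the factor $e^{\lambda\Prob(T>t)} \le e^{\lambda}$, and for the random jump part write $\E{e^{-\lambda\Delta_t}\mid X_t,\text{jump}} \le 1 - \lambda\E{\Delta_t\mid\cdots} + \tfrac{\lambda^2}{2}\E{\Delta_t^2 e^{\lambda\Delta_t}\mid\cdots}$; since $p\,\E{\Delta_t\mid\cdots} \le \Prob(T>t) + O(1/n)$ by the lower bound $h(X_t)\cdot(\text{something})$ on the $X$-drift, the $-\lambda p\E{\Delta_t\mid\cdots}$ term cancels the $+\lambda\Prob(T>t)$ term from $e^{\lambda\Prob(T>t)}$ up to $O(\lambda/n) = O(\lambda^2)$, while the quadratic term is $p\,\lambda^2\,O(n^2)\,O(1) = O(\lambda^2 n)$ using the bounded mgf.\ of $G_t$. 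Collecting, $\E{e^{\lambda D_t}\mid X_t} \le (1+O(\lambda^2 n)) = e^{O(\lambda^2 n)}$. The restriction $\lambda\le 1/(2en)$ enters twice — once to keep $\eta = \lambda\cdot O(n)$ below $\ln 2$ so the mgf.\ of $G_t$ is finite and bounded, and once so that $O(\lambda n) = O(1)$ and the various error terms are genuinely lower-order.
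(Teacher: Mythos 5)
Your skeleton is essentially the paper's (condition on an improving step, which occurs with probability $p=(1-1/n)^{n-X_t}\frac1n$ by Lemma~\ref{lem:summary-lo}; use the free-rider law of $G_t$; keep the shift $\Prob(T>t)$ as an exact factor $e^{\lambda\Prob(T>t)}\le e^{\lambda}$; control the quadratic term via $1/h=O(n)$ and $\lambda\le 1/(2en)$), but the decisive cancellation step has a genuine gap. For an \emph{upper} bound on $\E{e^{\lambda D_t}\mid X_t}$ with $\lambda>0$ you need a \emph{lower} bound on the conditional $g$-drift, namely $p\,\E{\Delta_t\mid X_t,\text{jump}}\ge 1\ge \Prob(T>t)$; this is exactly \eqref{eq:driftone} and follows from $\Delta_t\ge G_t/h(X_t)$ ($h$ non-decreasing) together with $p\,\E{G_t\mid X_t,\text{jump}}=h(X_t)$. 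With it, $e^{\lambda\Prob(T>t)}\bigl(1-\lambda\, p\,\E{\Delta_t\mid X_t,\text{jump}}+O(\lambda^2 n)\bigr)\le e^{\lambda}\bigl(1-\lambda+O(\lambda^2 n)\bigr)\le e^{O(\lambda^2 n)}$, with no residual linear term. You instead invoke the opposite inequality $p\,\E{\Delta_t\mid X_t,\text{jump}}\le \Prob(T>t)+O(1/n)$, which bounds the bracket from \emph{below} and therefore cannot produce the cancellation at all; that direction is the one needed for the lower-tail Lemma~\ref{lem:ytsubmartingale} (and it is only established there for $X_t>\log n$).

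Moreover, even granting a two-sided drift estimate with $O(1/n)$ slack, your error bookkeeping fails: $O(\lambda/n)=O(\lambda^2)$ is false on the entire admissible range, since $\lambda\le 1/(2en)<1/n$ implies $\lambda^2<\lambda/n$; the slack $O(\lambda/n)$ is absorbable into $O(\lambda^2 n)$ only when $\lambda\ge 1/n^2$, a restriction the lemma does not carry and which would break its later application with $b_1=\infty$ in Theorem~\ref{theo:martingale-diff-subexp-weak} (this is precisely why the paper's lower-tail Lemma~\ref{lem:mgf-lower-tail} imposes $\lambda\ge 1/n^2$, while the present lemma must not). The paper avoids any slack by combining the concavity domination $g(X_{t+1})-g(X_t)\le (X_{t+1}-X_t)/h(X_t)$ with the exact mgf of $G_t$ from Lemma~\ref{lem:summary-lo}; after substituting the exact formula for $h$, the linear coefficient is exactly $-\lambda$ and cancels the $+\lambda$ from the shift. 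Your cruder bound $|\Delta_t|\le O(n)\,G_t$ discards that exact constant, which is why your argument must fall back on the drift identity — and then it must be used in the correct direction. With that correction (use \eqref{eq:driftone}, or redo the exact computation), your plan goes through; as written it neither yields the cancellation for any $\lambda$ nor covers $\lambda<1/n^2$.
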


\begin{proof}
We write $Y_t=g(X_t)+\sum_{s=0}^{t-1}\Prob(T>s)$ and  $D_t=Y_{t+1}-Y_{t}=g(X_{t+1})-g(X_t)+\Prob(T>t)$. Without loss of generality the process stops from time~$T$ on so that 
$g(X_{t+1})-g(X_t)=0$ for all $t\ge T$. Hence, 
conditional on $T\le t$ we have $\E{e^{\lambda D_t}} = e^{\lambda 0 + 0} = 1 \le e^{\lambda^2 n}$. We now consider the
interesting case that $T>t$. Then 
\begin{align*}
& \E{e^{\lambda D_t}\mid X_{t}} = \E{e^{\lambda(g(X_{t+1})-g(X_t)+1)}\mid X_t}
\end{align*} 
since $\Prob(T>t)=1$ on our condition. To bound this mgf., we shall exploit that 
\[ g(X_{t+1})-g(X_t) \le \frac{X_{t+1}-X_t}{h(X_t)}
\] (by the concavity of $g$, noting that the difference is negative). Hence, 
$g(X_{t+1})-g(X_t)$ is stochastically dominated by $\frac{X_{t+1}-X_t}{h(X_t)}$.

Applying the law of total probability with respect to an improving step (using the second item from 
Lemma~\ref{lem:summary-lo}) and writing $G_t=X_t-X_{t+1}$, we obtain 
\begin{align}
\notag
\E{e^{\lambda(g(X_{t+1})-g(X_t)+1)}\mid X_t} & \le  
\left(1-\frac{1}{n}\right)^{n-X_t} \frac{1}{n} \E{e^{\frac{\lambda}{h(X_t)} (-G_t) + \lambda}\mid X_t}\\
& \qquad + \left(1- \left(1-\frac{1}{n}\right)^{n-X_t}\frac{1}{n} \right) e^{\lambda}\label{eq:momgen1}
\end{align}
We will write $\eta = \frac{\lambda}{h(X_t)}$ in the following and assume $0\le \eta = o(1)$, which implies 
the same for $\lambda$. Using the well-known inequalities $1+x+x^2 \ge e^{x}\ge 1+x$ and $1-x+x^2\ge e^{-x}\ge 1-x$ 
for $0\le x\le 1$, we obtain the following bound on \eqref{eq:momgen1}:
\begin{align}
\notag 
& \E{e^{\lambda(g(X_{t+1})-g(X_t)+1)}\mid X_t} \\
& \leq \notag 
\left(1-\frac{1}{n}\right)^{n-X_t} \frac{1}{n} (1+\lambda+\lambda^2) \E{e^{-\eta G_t }\mid X_t}\\
 & \qquad + \left(1- \left(1-\frac{1}{n}\right)^{n-X_t} \frac{1}{n} \right) (1+\lambda+\lambda^2).
\label{eq:momgen2}\end{align}
The most challenging part is now to bound $\E{e^{-\eta G_t }\mid X_t}$ (still conditional on $G_t\ge 1$)  in such a way that 
we obtain an estimate depending on $\lambda$ only. We use the shorthand $X_t=i$ from now on.
 From Lemma~\ref{lem:summary-lo} and again using the inequalities with $e^x$ we have
\begin{align*}
\E{e^{-\eta G_t }\mid X_t; G_t\ge 1} & = \frac{(e^{-\eta}/2)^i (1-e^{-\eta}) + (e^{-\eta}/2)}{1-e^{-\eta} / 2} \\
& \le \frac{(\frac{1-\eta+\eta^2}{2})^i (1-(1-\eta)) + \frac{1-\eta+\eta^2}{2}}{1-(1/2)(1-\eta+\eta^2)} \\
& =  \frac{(\frac{1-\eta+\eta^2}{2})^i \eta + \frac{1-\eta+\eta^2}{2}}{(1/2)+\eta/2-\eta^2/2} \\
& \le \frac{(\frac{1}{2})^{i-1} \eta + 1-\eta+\eta^2}{1+\eta-\eta^2} \\
& \le \left(\frac{1}{2}\right)^{i-1} \eta + \frac{1-\eta+\eta^2}{1+\eta-\eta^2} \\
& = \left(\frac{1}{2}\right)^{i-1} \eta + 1-\frac{2\eta-2\eta^2}{1+\eta-\eta^2} \\ 
& = 1 + \left(\frac{1}{2}\right)^{i-1} \eta - 2\eta +O(\eta^2)
\end{align*}
We are now ready to substitute $h(i) = (1-1/n)^{n-i}(1/n) (2-\left(\frac{1}{2}\right)^{i-1})$ (Lemma~\ref{lem:summary-lo}) in 
$\eta = \lambda/h(i)$ and obtain from the preceding estimate that 
\begin{align*}
\E{e^{-\eta G_t }\mid X_t; G_t\ge 1} & \le 
1 + \eta \left(\left(\frac{1}{2}\right)^{i-1}  - 2\right) +O(\eta^2)  \\ 
& = 1 + \frac{\lambda n \left(1-\frac{1}{n}\right)^{i-n}}{2-\left(\frac{1}{2}\right)^{i-1}}
\left(\left(\frac{1}{2}\right)^{i-1}  - 2\right) +O(\eta^2)  \\ 
& = 1 - \lambda n \left(1-\frac{1}{n}\right)^{i-n} + O(\lambda^2 n^2),
\end{align*}
 where the last inequality used $1/h(i)=O(n)$. Plugging this back into \eqref{eq:momgen2}, we 
have 
\begin{align}
&\notag \E{e^{\lambda(g(X_{t+1})-g(X_t)+1)}\mid X_t=i} \\
&\notag  = 
\left(1-\frac{1}{n}\right)^{n-i} \frac{1}{n} (1+\lambda+\lambda^2)  \left(1 - \lambda n \left(1-\frac{1}{n}\right)^{i-n} + O(\lambda^2 n^2)\right)\\\notag
&\notag \qquad + \left(1- \left(1-\frac{1}{n}\right)^{n-i}\frac{1}{n} \right) (1+\lambda+\lambda^2) \\
& = 1 + \lambda + \lambda^2 - \lambda + O(\lambda ^2 n) \le e^{\lambda^2 + O(\lambda^2 n)} = e^{O(\lambda^2 n)}  
\label{eq:momgen3}
\end{align}
for $\lambda\le 1/(2en)$. 
The restriction on $\lambda$
 follows from the fact that $\eta\le 1$ has to be ensured, which follows for $\lambda\le 1/(2en)$ from 
the bound $h(x)\le 2en$.
\end{proof}

Looking into Theorem~\ref{theo:martingale-diff-subexp-weak} the required subexponential property of the martingale difference~$D_t$ 
has been proven with $\nu = O(\sqrt{n})$ and $\lambda \le 1/(2en) = 1/b^*$.
 Before we formally apply this lemma, we also establish concentration in the other direction.

\subsection{Preparing a Lower Tail Bound}
\label{sec:lower-tail-g}
We will now complement the upper tail bound for~$g$ that 
we prepared in the previous subsection  with a lower tail bound. 
The aim is again to apply Theorem~\ref{theo:martingale-diff-subexp-weak}, this time with respect to the sequence 
$Y_t=g(X_t)+\sum_{s=0}^{t-1}\Prob(T>s)+r(t,n)$, 
 where $X_t=n-\LeadingOnes(x_t)$ is still the fitness distance of the \LeadingOnes-value at time~$t$ from the
target and $r(t,n)$ is an ``error term'' that we will prove to be $O(1/n)$ if $g(X_t)>\log n$. Moreover, $r(t,n)=0$ if $g(X_t)=0$.
The first step is to prove that $Y_t$ is a submartingale, \ie, $\E{Y_{t+1}\mid Y_t} \ge Y_t$. Afterwards, 
we bound the mgf. 
of $D_t=Y_{t}-Y_{t-1}=g(X_{t+1})-g(X_t)+\Prob(T>t)+r(t,n)$. 

\begin{lemma}
\label{lem:ytsubmartingale}
The sequence $Y_t=g(X_t)+\sum_{s=0}^{t-1}\Prob(T>s)+r(t,n)$ is a submartingale with $r(t,n)=O(1/n)$ for $X_t>\log n$.
\end{lemma}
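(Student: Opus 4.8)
The plan is to establish the submartingale property by computing $\E{D_t \mid \mathcal{F}_t}$ and choosing $r(t,n)$ precisely to absorb the defect between the true drift of $g(X_t)$ and the idealized lower bound of $1$ used in Theorem~\ref{theo:variable-fixed-budget-expect}. Recall from inequality~\eqref{eq:driftone} that, conditional on $t<T$, we have $\E{g(X_t)-g(X_{t+1}) \mid \mathcal{F}_t; t<T} \ge 1$, so the term $g(X_t)+\sum_{s=0}^{t-1}\Prob(T>s)$ is a supermartingale; to turn it into a \emph{sub}martingale we need a matching \emph{upper} bound on the drift of $g$, which will generically exceed $1$ only by a lower-order amount. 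The role of $r(t,n)$ is to be a telescoping correction: we want $\E{r(t+1,n)-r(t,n) \mid \mathcal{F}_t}$ to cancel the surplus drift. Concretely, I would set $r(t,n)$ to be (a truncation of) the accumulated surplus, i.e.\ essentially $r(t,n) = -\sum_{s \ge t}(\text{expected surplus drift at step }s)$ or, more simply, define $r$ recursively so that the one-step identity $\E{Y_{t+1}\mid \mathcal{F}_t} \ge Y_t$ holds by construction, and then bound $r(t,n)$ in absolute value.

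The key steps, in order, are: (i) Using the concavity of $g$ and Jensen, together with the two-sided refinement $g(X_{t+1})-g(X_t) \le (X_{t+1}-X_t)/h(X_t)$ already exploited in Lemma~\ref{lem:mgf-uppertail}, derive both a lower bound (giving $\ge 1$, i.e.\ supermartingale) and an \emph{upper} bound on $\E{g(X_t)-g(X_{t+1}) \mid \mathcal{F}_t; t<T}$ of the form $1 + \varepsilon(X_t)$ where $\varepsilon(X_t)$ is small. The natural source of $\varepsilon$ is the difference between $\int_{X_t-h(X_t)}^{X_t} 1/h(z)\,\mathrm dz$ (which is $\ge 1$) and the actual drift; since $h$ is essentially constant on an interval of length $h(X_t)=\Theta(1/n)$ around $X_t$ (it is $(1-1/n)^{n-X_t}(2-2^{1-X_t})/n$, varying by a factor $1\pm O(1/n)$ over such an interval, unless $X_t$ is small where the $2^{1-X_t}$ term matters, hence the hypothesis $X_t>\log n$), one gets $\varepsilon(X_t) = O(1/n)$ when $X_t > \log n$. (ii) Account for the event $\{t \ge T\}$: once the optimum is hit, $g(X_{t+1})-g(X_t)=0$ and $\Prob(T>t)=0$, so $D_t = r(t+1,n)-r(t,n)$, and we must ensure $r$ is eventually constant ($r(t,n)=0$ when $g(X_t)=0$ is the boundary condition). (iii) Define $r(t,n)$ to exactly compensate: since the per-step surplus is $O(1/n)$ at every step and the process stops after $T = O(n^2)$ steps with overwhelming probability, a naive sum would give $O(n)$, which is too large — so instead $r(t,n)$ must be defined as the surplus accumulated over the \emph{remaining} (future) steps conditioned on the current state, and here is where the $O(1/n)$ bound comes from: from state $X_t$, the expected remaining surplus is not $(\text{number of remaining steps})\times O(1/n)$ but rather $O(1/n)$ per unit of \emph{potential} $g$ still to be lost, and $\sum_{i} \varepsilon(i)/h(i)$ over the relevant range of distances telescopes to $O(1/n)$ because $\varepsilon(i)=O(h(i)/n)$ is the relevant scaling. (iv) Verify the boundary conditions $r(t,n)=0$ when $g(X_t)=0$ and $r(t,n)=O(1/n)$ when $X_t>\log n$.

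The main obstacle will be step (iii): pinning down the right definition of $r(t,n)$ so that it is \emph{simultaneously} (a) a valid telescoping correction making $Y_t$ an exact submartingale, (b) provably $O(1/n)$ rather than $O(1)$ or $O(n)$, and (c) consistent with the boundary value $0$ at the optimum. This requires expressing $r$ as a sum over the distance levels still to be traversed, weighted so that the per-level contribution is $\varepsilon(i)/h(i) = O(1/n) \cdot h(i)^{-1} \cdot h(i)/n$-type quantities that form a convergent (geometric-like) series. A secondary subtlety is the restriction $X_t>\log n$: below that threshold the factor $2-2^{1-X_t}$ in $h$ is no longer within $1\pm O(1/n)$ of $2$, so $\varepsilon(X_t)$ is not $O(1/n)$ there; one handles this by noting that the regime $X_t \le \log n$ corresponds to $g(X_t)$ being within $O(\log n)$ of its maximum, which is excluded by the hypothesis $t \le (e-1)n^2/2 - cn^{3/2}\sqrt{\log n}$ combined with the concentration bound of Theorem~\ref{theo:djwz} — i.e.\ with overwhelming probability the process has not yet entered that regime within the budget, so the submartingale property only needs to hold on the relevant event, and $r$ can be defined to vanish outside it.
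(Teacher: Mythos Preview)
Your high-level strategy---show that the $g$-drift satisfies $\E{g(X_t)-g(X_{t+1})\mid X_t}\le 1+\varepsilon(X_t)$ with $\varepsilon(X_t)=O(1/n)$ whenever $X_t>\log n$, and let $r$ absorb the surplus---is exactly what the paper does. Two points in your execution need correction, though.

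First, you have over-read the role of $r(t,n)$. In the paper $r(t,n)$ is a \emph{per-step} correction: the martingale difference is written as $D_t=g(X_{t+1})-g(X_t)+\Prob(T>t)+r(t,n)$ in Lemma~\ref{lem:mgf-lower-tail}, and the cumulative effect appears later as the product $t\cdot r(t,n)$ in Theorem~\ref{theo:tail-bounds-all-together}. So once $\varepsilon(X_t)=O(1/n)$ is established for a single step, the lemma is proved. Your step~(iii)---defining $r$ as an expected future surplus and arguing that a weighted sum over the remaining levels telescopes down to $O(1/n)$---is not needed and, as you noted yourself, runs straight into the $O(n^2)\cdot O(1/n)=O(n)$ obstruction if taken literally.

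Second, and more substantively, the mechanism you sketch in step~(i) only delivers the wrong direction. Concavity together with the tangent-line inequality $g(X_{t+1})-g(X_t)\le (X_{t+1}-X_t)/h(X_t)$ is precisely how one obtains the \emph{lower} bound $\E{g(X_t)-g(X_{t+1})\mid X_t}\ge 1$ (this is \eqref{eq:driftone}); it cannot simultaneously give an upper bound. Nor can you argue via ``$h$ is essentially constant on an interval of length $h(X_t)=\Theta(1/n)$'': the jump $X_t-X_{t+1}$ is an integer, equal to~$0$ most of the time and otherwise geometrically distributed with mean~$2$, so the relevant length scale for the curvature of $g$ is $\Theta(1)$, not $\Theta(1/n)$. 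The paper's argument for the upper bound is to write the surplus explicitly as
\[
\sum_{j\ge 1}\Prob(X_{t+1}=X_t-j)\Bigl(\sum_{k=X_t-j+1}^{X_t}\frac{1}{h(k)}-\frac{j}{h(X_t)}\Bigr),
\]
bound $\frac{1}{h(X_t-j)}-\frac{1}{h(X_t)}=O(j)$ from the closed form of $h$ (this is where $X_t>\log n$ is used, to control the factor $2-2^{1-X_t}$), and then use $\Prob(X_{t+1}=X_t-j)\le \tfrac{1}{n}(1/2)^{j-1}$ from Lemma~\ref{lem:summary-lo} to sum the whole thing to $O(1/n)$.
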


\begin{proof}
We first note that nothing is to show if $g(X_t)=0$. Hence, we assume 
$g(X_t)>0$ in the following. From Section~\ref{sec:upper-tail-g} we know that $\E{g(X_t)-g(X_{t+1})\mid X_t} \ge 1$. 
Basically, this holds since the function~$g$ is concave and the slope at point $X_t$ equals $1/h(X_t)$, so 
that $g$ scales the drift of the $X_t$-process at this point, which is $h(X_t)$, by a factor of $1/h(X_t)$. 
We will next show that the error incurred by estimated the drift of $g$ via the slope $1/h(X_t)$ 
is small. The approach is similar to \cite{HwangWittFOGA19}, who analyzed this kind of 
error with respect to the drift function 
of the \ea on \OneMax.

We claim that if $X_t\ge \log n$ then
\[
\E{g(X_t)-g(X_{t+1})\mid X_t} \le 1+O(1/n).
\] 

To prove the claim, we show  
\[
\E{g(X_{t})-g(X_{t+1})\mid X_t} - \frac{\E{X_{t}-X_{t+1}\mid X_t}}{h(X_t)} = O(1/n)
\] 
instead. Substituting the definition of~$g$ and noting that $X_{t+1}\le X_t$, this is equivalent to
\[
\Ebig{\int_{X_{t+1}}^{X_t} 1/h(i)\,\mathrm{d}i \Bigm| X_t} - \frac{\E{X_{t}-X_{t+1}\mid X_t}}{h(X_t)}
\]
and, due to the discrete state space, it is also equivalent to
\[
\Ebig{\sum_{i=X_{t+1}}^{X_t} \frac{1}{h(i)}\Bigm| X_t} - \frac{\E{X_{t}-X_{t+1}\mid X_t}}{h(X_t)}.
\]
By the definition of expectation, this difference equals
\begin{align}
& \sum_{j=1}^{X_t}\Prob(X_{t+1}=X_t-j) \left(\sum_{k=X_t-j+1}^{X_{t}} \frac{1}{h(k)} - \frac{j}{h(X_t)}\right)\notag\\
& \quad \le \sum_{j=1}^{X_t}\Prob(X_{t+1}=X_t-j) \left(\frac{j}{h(X_t-j+1)}-\frac{j}{h(X_t)}\right),
\label{eq:upper-bound-g-difference-1}
\end{align}
where we used that $h$ is non-decreasing. After an index manipulation (formally, writing $j'=j-1$), 
we are left with the task of bounding
\[
\left(\frac{1}{h(X_t-j')}-\frac{1}{h(X_t)}\right)
\]
for $j'\ge 1$ (since for $j'=0$ the difference is~$0$). 
Writing the difference as 
\[
\frac{h(X_t)-h(X_t-j')}{h(X_t)h(X_t-j')}
\]
and using the bounds
\[
\frac{(1-1/n)^{n-i}}{2n} \le h(i) = \frac{(1-1/n)^{n-i}}{(2-(1/2)^{i-1})n} \le \frac{n(1+1/n)(1-1/n)^{n-i}}{2n}
\]
according to Lemma~\ref{lem:summary-lo} for $i\ge \log n$,
we obtain for $X_t\ge 2\log n$ and $j'\le \log n$ that
\begin{align*}
\frac{h(X_t)-h(X_t-j')}{h(X_t)h(X_t-j')} & \le 
\frac{4 n^2\left(1+\frac{1}{n}\right) \left(\left(1-\frac{1}{n}\right)^{n-X_t} - 
\left(1-\frac{1}{n}\right)^{n-X_t+j'}\right)}{2n\left(1-\frac{1}{n}\right)^{n-X_t+n-X_t+j'}}\\
& = \frac{2n\left(1+\frac{1}{n}\right) \left(1-\frac{1}{n}\right)^{n-X_t}\left(1-\left(1-\frac{1}{n}\right)^{j'}\right)}
{\left(1-\frac{1}{n}\right)^{n-X_t+n-X_t+j'}} \\
& \le 
2e \left(1+\frac{1}{n}\right)^2 j',
\end{align*}
where the last inequality used Bernoulli's inequality and the estimate $(1-1/n)^n \ge e^{-1}(1-1/n)$.

Together with the estimate $\Prob(X_{t+1}=X_t-j)\le \frac{1}{n}\left(\frac{1}{2}\right)^{j-1}$  from 
Lemma~\ref{lem:summary-lo} 
and recalling 
the index transformation, we bound 
\eqref{eq:upper-bound-g-difference-1} from above by 
\[
\sum_{j=1}^{X_t}  \frac{1}{n}\left(\frac{1}{2}\right)^{j-1} j 2e (1+1/n)^2 (j-1) = O(1/n)
\]
as suggested.
\end{proof} 

Recall that the aim is to apply 
Theorem~\ref{theo:martingale-diff-subexp-weak} with respect to the submartingale sequence 
$Y_t=g(X_t)+\sum_{s=0}^{t-1}\Prob(T>s)+r(t,n)$.  	
To this end, we shall bound the mgf.\  
of $D_t=Y_{t}-Y_{t-1}=g(X_{t+1})-g(X_t)+\Prob(T>t)+r(t,n)$ in the following way.
\begin{lemma}
\label{lem:mgf-lower-tail}
The mgf.\ of $D_t=Y_{t}-Y_{t-1}=g(X_{t+1})-g(X_t)+\Prob(T>t)+r(t,n)$ satisfies
$
\E{e^{\lambda D_t}\mid X_t} = e^{O(\lambda^2 n)} 
$	
for all $\lambda\in [1/n^2, 1/(2en)]$.
\end{lemma}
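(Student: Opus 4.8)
The plan is to mirror the structure of Lemma~\ref{lem:mgf-uppertail}, now estimating the mgf.\ of the downward-corrected martingale difference $D_t=g(X_{t+1})-g(X_t)+\Prob(T>t)+r(t,n)$, where the extra summand $r(t,n)=O(1/n)$ for $X_t>\log n$ (and $0$ otherwise) was introduced in Lemma~\ref{lem:ytsubmartingale} precisely to turn $Y_t$ into a submartingale. First I would dispose of the trivial ranges: if $T\le t$ then $g(X_{t+1})-g(X_t)=0$ and $\Prob(T>t)=0$, so $D_t$ is a bounded $O(1/n)$ term and $\E{e^{\lambda D_t}\mid X_t}=e^{O(\lambda/n)}=e^{O(\lambda^2 n)}$ on the relevant $\lambda$-range since $\lambda\ge 1/n^2$. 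Similarly, for $X_t\le\log n$ I would argue that $g(X_t)=O(\log n)$ is so small that the whole interval up to the optimum contributes a bounded number of steps, so these states can be folded into an $O(1/n)$ (or even $O(1)$) additive slack; alternatively one handles $X_t\le\log n$ by noting that $g(X_{t+1})-g(X_t)$ is then bounded in absolute value by $g(\log n)=O(\log n\cdot n)$... actually cleaner: restrict attention to $X_t>\log n$ as the submartingale property only needs $r$ there, and for $X_t\le\log n$ set $r=0$ and bound $D_t$ crudely.

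The substantive case is $T>t$ and $X_t=i>\log n$, where $\Prob(T>t)=1$ and we must bound $\E{e^{\lambda(g(X_{t+1})-g(X_t)+1+r(t,n))}\mid X_t=i}$. Here I would re-use two facts already available: by concavity of $g$, $g(X_{t+1})-g(X_t)\ge \frac{X_{t+1}-X_t}{h(X_t)} \cdot$(nothing)---wait, the direction matters---since $g$ is concave and $X_{t+1}\le X_t$, we have $g(X_t)-g(X_{t+1})=\int_{X_{t+1}}^{X_t}1/h(z)\,dz \ge \frac{X_t-X_{t+1}}{h(X_t)}$, i.e.\ $g(X_{t+1})-g(X_t)\le -\frac{G_t}{h(X_t)}$ where $G_t=X_t-X_{t+1}\ge 0$. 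Wait---that is the bound used for the \emph{upper} tail; for the \emph{lower} tail of $Y_t$ (i.e.\ upper tail of $-Y_t$) we instead need a \emph{lower} bound on $g(X_{t+1})-g(X_t)$, equivalently an upper bound on $g(X_t)-g(X_{t+1})=\int_{X_{t+1}}^{X_t}1/h(z)\,dz \le \frac{G_t}{h(X_t-G_t+1)}$ by monotonicity of $1/h$, and the Lemma~\ref{lem:ytsubmartingale} computation already shows this differs from $\frac{G_t}{h(X_t)}$ by at most an $O(1/n)$ term in expectation, which is exactly absorbed by $r(t,n)$. So the strategy is: bound $g(X_{t+1})-g(X_t)\ge -\frac{G_t}{h(X_t)} - (\text{local curvature error})$ and show $\E{e^{\lambda D_t}}\le \E{e^{-\lambda G_t/h(X_t)+\lambda + \lambda\cdot O(1/n)}}\cdot e^{O(\lambda^2)}$, then run the \emph{same} computation as in Lemma~\ref{lem:mgf-uppertail}: split on whether an improving step occurs (probability $(1-1/n)^{n-i}/n$ by Lemma~\ref{lem:summary-lo}(b)), use the mgf.\ of $G_t$ from Lemma~\ref{lem:summary-lo}(d) with $\eta=\lambda/h(i)$, expand $e^{\pm\eta}$ via $1-x\le e^{-x}\le 1-x+x^2$, and check that the linear-in-$\lambda$ terms cancel leaving $1+O(\lambda^2 n)\le e^{O(\lambda^2 n)}$.

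The key arithmetic steps, in order: (i) establish $\eta=\lambda/h(i)=o(1)$ on the range $\lambda\le 1/(2en)$ using $h(i)\le 2en$ from Lemma~\ref{lem:summary-lo}; note $h(i)\ge (1-1/n)^{n-i}/(2n)\ge 1/(2en)$ so $\eta\le 2en\lambda\le 1$; (ii) from Lemma~\ref{lem:summary-lo}(d), $\E{e^{\eta G_t}\mid X_t=i;G_t\ge1}=\frac{(e^\eta/2)^i(1-e^\eta)+e^\eta/2}{1-e^\eta/2}$, and with $\eta>0$ expand to get $1+\eta(2-(1/2)^{i-1})+O(\eta^2)$---note the sign is now $+$ since we are not negating $G_t$; substituting $h(i)=(1-1/n)^{n-i}(2-(1/2)^{i-1})/n$ gives $\eta(2-(1/2)^{i-1})=\lambda n(1-1/n)^{i-n}$; (iii) plug into the two-case law of total probability, obtaining $1+\lambda - \lambda + O(\lambda^2 n)$ where the $\pm\lambda$ from the improving-step branch cancels against the $\lambda$ exponent (the $\Prob(T>t)=1$ contribution), exactly as in \eqref{eq:momgen3}; (iv) add the $r(t,n)=O(1/n)$ and curvature-error contributions, which multiply the bound by $e^{\lambda O(1/n)}=e^{O(\lambda^2 n)}$ on $\lambda\ge 1/n^2$ (here is the one place the \emph{lower} endpoint $1/n^2$ is used: $\lambda O(1/n)=O(\lambda^2\cdot n)$ requires $1/n \le \lambda n$, i.e.\ $\lambda\ge 1/n^2$). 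The main obstacle I expect is bookkeeping the sign flips relative to Lemma~\ref{lem:mgf-uppertail} and verifying that the $O(1/n)$-type error terms genuinely fit inside $e^{O(\lambda^2 n)}$ across the \emph{whole} interval $[1/n^2,1/(2en)]$---at the small end $\lambda=1/n^2$ the budget $O(\lambda^2 n)=O(1/n^3)$ is tiny, so every additive error must itself be of multiplicative order $\lambda^2 n$, which forces one to track that the curvature error is not just $O(1/n)$ in expectation but contributes a factor $1+O(\lambda/n)=1+O(\lambda^2 n)$ to the mgf., using $\lambda\ge 1/n^2$. Everything else is a routine re-run of the upper-tail computation.
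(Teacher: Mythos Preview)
Your proposal overcomplicates the argument and, in the process, introduces a genuine direction error.

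The key observation you missed: the mgf.\ to be bounded here,
\[
\E{e^{\lambda(g(X_{t+1})-g(X_t)+\Prob(T>t)+r(t,n))}\mid X_t},
\]
differs from the one already bounded in Lemma~\ref{lem:mgf-uppertail} \emph{only} by the deterministic factor $e^{\lambda r(t,n)}$. Since $r(t,n)=O(1/n)$, the paper simply multiplies the estimate \eqref{eq:momgen3} by this factor, obtaining $e^{O(\lambda/n)+O(\lambda^2 n)}$, and then notes that $\lambda/n\le \lambda^2 n$ precisely when $\lambda\ge 1/n^2$. That is the whole proof; there is no sign flip and no new concavity estimate.

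Your detour arises from a misconception: you argue that ``for the lower tail we instead need a lower bound on $g(X_{t+1})-g(X_t)$''. But the lemma, as stated, asks for an \emph{upper} bound on $\E{e^{\lambda D_t}}$ with $\lambda>0$, so you still need $g(X_{t+1})-g(X_t)\le -G_t/h(X_t)$, exactly as in Lemma~\ref{lem:mgf-uppertail}. When you instead propose the lower bound $g(X_{t+1})-g(X_t)\ge -G_t/h(X_t)-(\text{curvature error})$ and then claim $\E{e^{\lambda D_t}}\le \E{e^{-\lambda G_t/h(X_t)+\cdots}}$, the inequality points the wrong way. Consistently with this, in step~(ii) you switch to computing $\E{e^{+\eta G_t}}\approx 1+\lambda n(1-1/n)^{i-n}+O(\lambda^2 n^2)$; plugging that into the law-of-total-probability split yields $1+2\lambda+O(\lambda^2 n)$, not $1+O(\lambda^2 n)$, so the cancellation you assert in step~(iii) does not occur.

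You did correctly identify the one new ingredient, namely that absorbing the $O(\lambda/n)$ contribution from $r(t,n)$ into $O(\lambda^2 n)$ is exactly what forces the lower endpoint $\lambda\ge 1/n^2$. Combine that observation with the simple factorisation $e^{\lambda D_t}=e^{\lambda r(t,n)}\cdot e^{\lambda(g(X_{t+1})-g(X_t)+\Prob(T>t))}$ and invoke Lemma~\ref{lem:mgf-uppertail} directly; no further work is needed.
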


\begin{proof}
 Without loss of generality the process stops from time~$T$ on so that 
$g(X_{t+1})-g(X_t)=0$ for all $t\ge T$. Hence, 
conditional on $T\le t$ we have $\E{e^{\lambda D_t}} = e^{\lambda 0 + 0} = 1 \le e^{\lambda^2 \nu^2/2}$. We now consider the
interesting case that $T>t$. Then 
\begin{align*}
& \E{e^{\lambda D_t}\mid X_{t}} = \E{e^{\lambda(g(X_{t+1})-g(X_t)+1+r(t,n))}\mid X_t}
\end{align*} 
since $\Prob(T>t)=1$ on our condition. This mgf.\ differs from the one investigated in 
Lemma~\ref{lem:mgf-uppertail} and its proof only by the factor $e^{\lambda r(t,n)}$. Adjusting 
\eqref{eq:momgen3} accordingly and plugging in $r(n)=r(t,n)=O(1/n)$, we 
obtain
\begin{align}
& \E{e^{\lambda D_t}\mid X_t=i} \\\notag 
&  = 
(1-1/n)^{n-i} \frac{1}{n} (1+\lambda(1+r(n)))\\\notag
& \qquad +\lambda^2(1+r(n))^2) 
 \left(1 - \lambda n (1-1/n)^{i-n} + O(\lambda^2 n^2)\right)\\\notag
& \qquad + \left(1- (1-1/n)^{n-i}\frac{1}{n} \right) (1+\lambda(1+r(n))+\lambda^2(1+r(n))^2) \\\notag 
& = 1 + \lambda + \lambda^2 + \lambda r(n) + \lambda^2 (2r(n)+r^2(n)) - \lambda + O(\lambda ^2 n) \\\notag 
& \le e^{\lambda^2 + O(\lambda/n) + O(\lambda^2/n) + O(\lambda^2 n)} = e^{O(\lambda/n + \lambda^2 n)},  
\label{eq:momgen-lower3}
\end{align}
for $\lambda\le 1/(2en)$. Since  
$\lambda/n \le \lambda^2 n$ for $\lambda \ge 1/n^2$, we have   
$\E{e^{\lambda D_t}\mid X_t=i} = e^{O(\lambda^2 n)}$ for all $\lambda\in [1/n^2, 1/(2en)]$.
\end{proof}

Hence, we can 
satisfy the assumptions of Theorem~\ref{theo:martingale-diff-subexp-weak} with $b_2=2en$ and $b_1=n^2$. 
We will apply this theorem in the following subsection, where we put everything together.

\subsection{Main Concentration Result -- Putting Everything Together}
In the previous subsections we have derived (\wrt\ \LeadingOnes) that the sequence 
$\Delta^{(\ell)}_t=g(X_t)-g(X_{t+1})+\sum_{s=0}^{t-1}\Prob(T>s)$ is a supermartingale 
and the sequence 
$\Delta^{(h)}_t=g(X_t)-g(X_{t+1})+\sum_{s=0}^{t-1}\Prob(T>s)+r(t,n)$, where $r(t,n)=O(1/n)$, 
is a submartingale. We also know from Theorem~\ref{theo:variable-fixed-budget-expect} 
that 
$
\E{g(X_t)\mid \filtzero} \le g(X_0) - \sum_{s=0}^{T-1}\Prob(T>s).
$
Hence, using Theorem~\ref{theo:martingale-diff-subexp-weak} with respect to the $\Delta^{(\ell)}_t$-sequence, 
choosing $b_1=\infty$ and $b_2=2en$ according to our analysis of the mgf., we obtain (since $\nu^2=O(n)$) the first statement of the following theorem.
Its second statement follows by applying Theorem~\ref{theo:martingale-diff-subexp-weak} 
with respect to the $\Delta^{(h)}_t$-sequence, choosing $b_2=2en$ and $b_1=n^2$.

\begin{theorem}
\label{theo:tail-bounds-all-together}
\[
\Prob\bigl(g(X_t) \ge \E{g(X_t)} + d\bigr)
\le \begin{cases}
 e^{-d/(4en)}, & \text{ if $d\ge C t$;}\\
 e^{-\Omega(d^2/(tn))}, & \text{ otherwise},
\end{cases}
\]
where $C=\nu^2/(4en)=O(1)$.
Moreover, 
\[
\Prob\bigl(g(X_t) \le \E{g(X_t)} - d  - t r(t,n)\bigr)
\le \begin{cases}
 e^{-d/(4en)}, &  \text{ if $d\ge C t$;}\\
 e^{-\Omega(d^2/(tn))},\hspace{-2ex} & \text{ if $\frac{C't}{n} \le d<Ct$;}
\end{cases}
\]
where $C=\nu^2/(4en)=\Theta(1)$ and $C'=\nu^2/n=\Theta(1)$.
\end{theorem}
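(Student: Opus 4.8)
The plan is to read off both tail bounds from the generalized martingale-difference inequality (Theorem~\ref{theo:martingale-diff-subexp-weak}): once applied to the supermartingale governing the upper tail, once to the submartingale governing the lower tail, in each case converting concentration of $Y_t-Y_0$ into concentration of $g(X_t)$ around $\E{g(X_t)}$. First I would fix notation $Y^{(\ell)}_t=g(X_t)+\sum_{s=0}^{t-1}\Prob(T>s)$ and $Y^{(h)}_t=g(X_t)+\sum_{s=0}^{t-1}\Prob(T>s)+R_t$, where $R_t\ge 0$ accumulates the per-step corrections $r(s,n)$, $s<t$, so that $R_t\le t\,r(t,n)=O(t/n)$ while $X_t>\log n$ and $R_t=0$ once the optimum is reached. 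By Theorem~\ref{theo:variable-fixed-budget-expect} the process $Y^{(\ell)}_t$ is a supermartingale with $\E{Y^{(\ell)}_t\mid\filtzero}\le Y^{(\ell)}_0=g(X_0)$; by Lemma~\ref{lem:ytsubmartingale} the process $Y^{(h)}_t$ is a submartingale with $\E{Y^{(h)}_t\mid\filtzero}\ge Y^{(h)}_0=g(X_0)$. Combining these two facts at time~$t$ pins $\E{g(X_t)\mid\filtzero}$ down to within an additive $O(t/n)$ of $g(X_0)-\sum_{s=0}^{t-1}\Prob(T>s)$; I would record this once and use it to pass between the two natural reference points $\E{g(X_t)}$ and $g(X_0)-\sum_{s<t}\Prob(T>s)$.

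For the upper tail I would apply Theorem~\ref{theo:martingale-diff-subexp-weak} to $Y^{(\ell)}_t$. Its increments $D^{(\ell)}_t=Y^{(\ell)}_{t+1}-Y^{(\ell)}_t=g(X_{t+1})-g(X_t)+\Prob(T>t)$ satisfy, by Lemma~\ref{lem:mgf-uppertail}, $\E{e^{\lambda D^{(\ell)}_t}\mid\filtt}=e^{O(\lambda^2 n)}$ for every $0\le\lambda\le 1/(2en)$, so the subexponential hypothesis holds with $\nu_t^2=O(n)$ on the whole range, i.e. with $b_1=\infty$ and $b_2=2en$. Plugging $\sum_{i=0}^{t-1}\nu_i^2=O(tn)$ into that theorem yields $\Prob(Y^{(\ell)}_t-Y^{(\ell)}_0\ge d)\le e^{-d/(4en)}$ above its case threshold $\Theta(tn)/b_2=\Theta(t)$ and $\le e^{-\Omega(d^2/(tn))}$ below it, which is the stated shape with $C=\Theta(1)$. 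Since $g(X_t)-\E{g(X_t)\mid\filtzero}=Y^{(\ell)}_t-\E{Y^{(\ell)}_t\mid\filtzero}$ and $\E{Y^{(\ell)}_t\mid\filtzero}\ge Y^{(\ell)}_0-O(t/n)$ by the sandwich above, the event $\{g(X_t)\ge\E{g(X_t)}+d\}$ is contained in $\{Y^{(\ell)}_t-Y^{(\ell)}_0\ge d-O(t/n)\}$, and the $O(t/n)$ shift is dominated by $d$ throughout the range in which the result is applied, so the first displayed inequality follows.

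For the lower tail I would apply Theorem~\ref{theo:martingale-diff-subexp-weak} to $Y^{(h)}_t$. Here Lemma~\ref{lem:mgf-lower-tail} only gives $\E{e^{\lambda D^{(h)}_t}\mid\filtt}=e^{O(\lambda^2 n)}$ for $\lambda\in[1/n^2,1/(2en)]$, so I invoke the theorem with $b_1=n^2$, $b_2=2en$ and $\nu_t^2=O(n)$, obtaining $\Prob(Y^{(h)}_t-Y^{(h)}_0\le -d)\le e^{-d/(4en)}$ for $d\ge\Theta(t)$ and $\le e^{-\Omega(d^2/(tn))}$ for $\Theta(tn)/b_1=\Theta(t/n)\le d<\Theta(t)$, where the lower limit $C't/n$ of the Gaussian regime is exactly $\bigl(\sum_i\nu_i^2\bigr)/b_1$ with $C'=\Theta(1)$. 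Finally, using $g(X_0)-\sum_{s<t}\Prob(T>s)\ge\E{g(X_t)}$ from Theorem~\ref{theo:variable-fixed-budget-expect} together with $0\le R_t\le t\,r(t,n)$, one checks that $Y^{(h)}_t-Y^{(h)}_0=g(X_t)-\bigl(g(X_0)-\sum_{s<t}\Prob(T>s)\bigr)+R_t\le\bigl(g(X_t)-\E{g(X_t)}\bigr)+t\,r(t,n)$, hence $\{g(X_t)\le\E{g(X_t)}-d-t\,r(t,n)\}\subseteq\{Y^{(h)}_t-Y^{(h)}_0\le -d\}$, which gives the second displayed inequality.

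The main obstacle is the lower-tail direction, and inside it the fact that the exponential moment bound for the $g$-increments degrades for $\lambda$ near~$0$ (it fails once $\lambda\ll 1/n$): this is exactly what the classical bounded-differences and subexponential concentration theorems cannot handle, which is why the generalized Theorem~\ref{theo:martingale-diff-subexp-weak} with a two-sided range $\lambda\in[1/b_1,1/b_2]$ is needed, and it is also what forces the restriction $d\ge C't/n$ in the Gaussian regime of the lower tail, with no analogous restriction for the upper tail. A secondary, essentially bookkeeping difficulty is that the super-/sub-martingale sandwich only pins $\E{g(X_t)}$ down to within $O(t/n)$; one has to track this slack, make it explicit as the $-t\,r(t,n)$ correction in the lower-tail statement, and check that it is dominated by $d$ in the regime where the theorem is subsequently used (which holds since there $d=\omega(t/n)$).
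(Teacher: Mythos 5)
Your proposal follows essentially the same route as the paper: apply Theorem~\ref{theo:martingale-diff-subexp-weak} to the supermartingale $g(X_t)+\sum_{s<t}\Prob(T>s)$ with $b_1=\infty$, $b_2=2en$, $\nu_t^2=O(n)$ (via Lemma~\ref{lem:mgf-uppertail}) for the upper tail, and to the submartingale with the $r(t,n)$ correction with $b_1=n^2$, $b_2=2en$ (via Lemmas~\ref{lem:ytsubmartingale} and~\ref{lem:mgf-lower-tail}) for the lower tail, then translate deviations of $Y_t-Y_0$ into deviations of $g(X_t)$ around $\E{g(X_t)}$. Your explicit bookkeeping of the $O(t/n)$ slack between $\E{g(X_t)}$ and $g(X_0)-\sum_{s<t}\Prob(T>s)$ is in fact slightly more careful than the paper's one-paragraph derivation, which glosses over this reconciliation.
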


As mentioned above, Theorem~\ref{theo:variable-fixed-budget-expect} gives us an 
upper bound on $\E{g(X_t)}$ but we would like to 
know an upper bound on $\E{X_t}$. Unfortunately, 
since $g$ is concave, it does \textbf{not} hold that 
$
\E{X_t} \le g^{-1}(\E{g(X_t)}).
$
However, using the concentration inequalities above, we can show that $\E{X_t}$ is not much bigger than 
the right-hand side of this wrong estimate. Given $t>0$, we choose a $d^*>0$ for the tail bound such that 
$\Prob(g(X_t) > \E{g(X_t)} + d^*) \le 1/n^3$. If $g(X_t)\le \E{g(X_t)} + d^*$, the concavity of $g$ implies that the
 $\E{X_t}$-value is maximized 
if $g(X_t)$ takes the value $\E{g(X_t)} + d^*$  with probability $\frac{\E{g(X_t)}}{\E{g(X_t)} + d^*}$ and is $0$ 
otherwise. Since 
$g(X_t)=O(n^2)$, we altogether have
\begin{align*}
\E{X_t} & \le \frac{1}{n^3}O(n^2) + g^{-1}(\E{g(X_t)}+d^*) \frac{\E{g(X_t)}}{\E{g(X_t)} + d^*} \\ 
 & = g^{-1}(\E{g(X_t)}+d^*) \frac{\E{g(X_t)}}{\E{g(X_t)} + d^*} + o(1).
\end{align*}

We will now make this concrete and conclude by presenting the proof of our main theorem from this section.
\begin{proofof}{Theorem~\ref{theo:bounds-fixed-budget-final}}
First of all, we need some handy estimates for 
 $g(a)=\frac{1}{h(1)}+\sum_{1}^a \frac{1}{h(i)}\mathrm{d}i$. Since $i$ is an integer, we can 
also integrate over $1/h(\lceil i\rceil)$ instead and obtain
\[
g(a)=\sum_{i=1}^a \frac{1}{h(i)} = \sum_{i=1}^a (2-(1/2)^{i-1})^{-1} (1-1/n)^{i-n} n 
\] 
using the expression from Lemma~\ref{lem:summary-lo}.
So, 
\[
g(a) \le \frac{n}{2} \sum_{i=1}^a  (1-1/n)^{i-n} = \frac{n}{2} (n-1)(1-1/n)^{-n}  \left(1-\left(1-\frac{1}{n}\right)^{a}\right).
\]
Also, since $2-(1/2)^{i-1}\ge 2-2/n$ for $i\ge \log n$, we have
\begin{align}
\notag g(a) & \ge \left(1-\frac{1}{n}\right)\sum_{i=\log n+1}^a  \frac{1}{2}(1-1/n)^{i-n}  \\ 
& \notag\ge 
\frac{(n-1)^2}{2}(1-1/n)^{-n}  \left(1-\left(1-\frac{1}{n}\right)^{a}\right) - 2n\log n\\
& \ge 
\frac{en^2}{2}  \left(1-\left(1-\frac{1}{n}\right)^{a}\right) - 3n\log n,
\label{eq:g-lower-bound}
\end{align}
where the last bound holds for sufficiently large~$n$.

Let $X_t\coloneqq n-V_t$ be the fitness distance at time~$t$. 
Using Theorem~\ref{theo:tail-bounds-all-together} with $d=c^*\sqrt{tn\ln n}$, where $c^*$ is a sufficiently large constant, 
and noting that $d< Ct$ by our assumption on~$t$ (for $n$ large enough), we have 
\[
\Prob\bigl(g(X_t) \ge \E{g(X_t)} + d\bigr) \le 
 e^{-\Omega(d^2/(tn))} \le 1/n^3.
\]
So, if we have a bound on $\E{g(X_t)}$, we will obtain a bound on $\E{X_t}$ via the inverse of~$g$ 
computed above.

We  define $T^*=\frac{e-1}{2}n^2$ and note that this reflects the expected optimization time $\E{T}$ 
of the \ea on \LeadingOnes up to a relative error of $1+O(1/n)$ according to Lemma~\ref{lem:summary-lo}. 
 Since $t\le T^*-cn^{3/2}\sqrt{\log n}$, we obtain from Theorem~\ref{theo:djwz} by choosing $c$ sufficiently large that 
$\Prob(T\le t)\le 1/n^3$. 
Hence, from Theorem~\ref{theo:variable-fixed-budget-expect}, $\E{g(X_t)}\le \frac{e-1}{2}n^2 - t + O(1/n)$.

Altogether, 
\[
\Prob\biggl(g(X_t) \ge \frac{e-1}{2}n^2 -t + O(\sqrt{tn\log n})\biggr) 
  \le \frac{1}{n^3}.
	\]

We now invert $g$, more precisely the lower bound \eqref{eq:g-lower-bound} since $g$ is increasing and we want to bound 
the pre-image from above. Hence, we obtain that 
\[
z=g(a) \ge  
\frac{en^2}{2}  \left(1-\left(1-\frac{1}{n}\right)^{a}\right) - 3n\log n 
\]
implies
\[
a(z) \le \frac{\ln(1-\frac{2z+6n\log n}{en^2}) }{\ln(1-1/n)}.
\]
Using $z=\frac{e-1}{2}n^2 - t + O(\sqrt{tn\log n})$, we finally have
\begin{align*}
X_t & \le \frac{\ln(1-\frac{(e-1)n^2-2t+ O(\sqrt{tn\log n}) + 6n\log n}{en^2}) }{\ln(1-1/n)}  \\
 & = \frac{\ln(1/e-2t/(en^2)+O(\sqrt{t\log n}/n^{3/2}))}{\ln(1-1/n)} \\
& = \frac{-1+\ln(1-2t/n^2+O(\sqrt{t\log n}/n^{3/2}))}{\ln(1-1/n)}
\end{align*}
with probability at least $1-n^{-3}$. Estimating $\ln(1-1/n)=-1/n\pm O(1/n^2)$ and moving the error term into the logarithm, the lower bound 
on $V_t=n-X_t$ from the first claim follows of the theorem 
after 
straightforward manipulations.

The upper bound on $V_t$ 
is proved almost analogously. We use the $\Delta^{(h)}$ sequence instead of the $\Delta^{(\ell)}$ sequence and 
note that the $r(t,n)$ terms vanish in the $O(\sqrt{t\log n}/n^{3/2})$ errors. The only requirement that has to be met additionally is that 
Lemma~\ref{lem:ytsubmartingale} only holds for $X_t>\log n$. However, by a straightforward change 
of Theorem~\ref{theo:djwz} by choosing $c$ as a sufficiently large constant we have not only 
$\Prob(T\le t)\le 1/n^3$ but also $\Prob(\min\{s\mid X_s\le \log n\} \le t) \le 1/n^3$. 

For the expected value, we note that $X_t\le n$, so by the law of total probability, 
and again estimating $\ln(1-1/n)=-1/n\pm O(1/n^2)$, the claim follows. 
\end{proofof}

\section{{Conclusions}\label{sec:conclusions}}
We have described two general approaches that derive fixed-budget results via drift analysis. The first approach is concerned with iterating 
drifts either in an unbounded time scenario, or, using bounds on hitting times, in the scenario that the underlying process stops at some target state. 
Applying this approach to the \om or \lo functions, we obtain strong lower bounds on the expected fitness value after a given number of iterations.
The second approach is based on variable drift analysis and tail bounds for martingale differences. Exemplified for the \lo function, this technique 
allows us to derive statements that are more precise than the previous state of the art. We think that our drift theorems can be useful for 
future fixed-budget analyses.

\bibliographystyle{plainnat}

\end{document}